\newtheorem{theorem}{Theorem}
\newtheorem{proposition}{Proposition}
\def\method{PieCoN}
\newcommand{\xmark}{\ding{55}} 
\newcommand{\cmark}{\ding{51}} 
\title{Piecewise Constant Spectral Graph Neural Network}
\author{\name Vahan Martirosyan \email vahan.martirosyan@centralesupelec.fr \\
      \addr Université Paris-Saclay\\
      CentraleSupélec, Inria, France
      \AND
      \name  Jhony H. Giraldo \email jhony.giraldo@telecom-paris.fr\\
      \addr LTCI, Télécom Paris\\
      Institut Polytechnique de Paris, France
      \AND
      \name Fragkiskos D. Malliaros \email fragkiskos.malliaros@centralesupelec.fr \\
      \addr Université Paris-Saclay\\
      CentraleSupélec, Inria, France
}
\begin{document}

\maketitle

\begin{abstract}
Graph Neural Networks (GNNs) have achieved significant success across various domains by leveraging graph structures in data.
Existing spectral GNNs, which use low-degree polynomial filters to capture graph spectral properties, may not fully identify the graph's spectral characteristics because of the polynomial's small degree. However, increasing the polynomial degree is computationally expensive and, beyond certain thresholds, leads to performance plateaus or degradation. In this paper, we introduce the \textbf{Pie}cewise \textbf{Co}nstant Spectral Graph \textbf{N}eural Network (\method) to address these challenges. 
\method~combines constant spectral filters with polynomial filters to provide a more flexible way to leverage the graph structure.
By adaptively partitioning the spectrum into intervals, our approach increases the range of spectral properties that can be effectively learned.
Experiments on nine benchmark datasets, including both homophilic and heterophilic graphs, demonstrate that \method~is particularly effective on heterophilic datasets, highlighting its potential for a wide range of applications. The implementation of \method~is available at \url{https://github.com/vmart20/PieCoN}.
\end{abstract}

\section{Introduction}

Graph Neural Networks (GNNs) \citep{survey1, survey2} have demonstrated remarkable performance across various application domains.
They have been successfully applied in areas such as social network analysis \citep{panagopoulos-asonam23}, recommendation systems \citep{recommendation1, recommendation2}, drug discovery \citep{drug_discovery1, drug_discovery2}, and materials modeling \citep{molecular_prediction1,pmlr-v202-duval23a}, where data can naturally be represented as graphs.
GNNs can be broadly classified into two types: spatial and spectral GNNs.
Spatial GNNs \citep{GCN, GAT, GPR-GNN} use a message passing approach to learn node representations by collecting information from neighboring nodes.
This approach allows spatial GNNs to capture local structural information and adapt to varying neighborhood sizes.
On the other hand, spectral GNNs \citep{Jacobi, ChebyNet, BernNet, gegenGNN-TNNLS24} use the graph's spectral characteristics, such as the graph Laplacian's eigenvalues and eigenvectors, to transform node features.
By leveraging the spectral domain, these models can capture global structural patterns and apply graph convolution operations in the frequency domain.

Many existing spectral GNNs use low-degree polynomial filters, which approximate the filtering functions by applying polynomials to the graph's Laplacian matrix or other graph-shift operators \citep{Jacobi, ChebyNet, BernNet}.
One disadvantage of these low-degree polynomial filters is that they are continuous and, because of the low degree, may not give enough weight to specific eigenvalues, as the change between closely spaced eigenvalues cannot vary significantly.
This can be problematic, particularly in real-world graphs where certain eigenvalues like \textit{zero}\footnote{Note that, for the Laplacian matrix, the multiplicity of eigenvalue zero corresponds to the number of connected components. This correspondence does not hold for the normalized adjacency matrix we employ here.} have large multiplicities \citep{lu2024improving, lim2023sign}.

In this paper, we propose the \textbf{Pie}cewise \textbf{Co}nstant Spectral Graph \textbf{N}eural Network (\method) to overcome this limitation.
Our approach combines constant spectral filters with polynomial filters to better capture the spectral properties of the graph.
The constant filters are defined by setting the values in the diagonal eigenvalue matrix to ones within specific intervals and zeros elsewhere, effectively isolating different frequency bands.
\begin{wrapfigure}{r}{0.47\textwidth}
    \vspace{-5pt}
    \centering
    \includegraphics[width=0.85\linewidth]{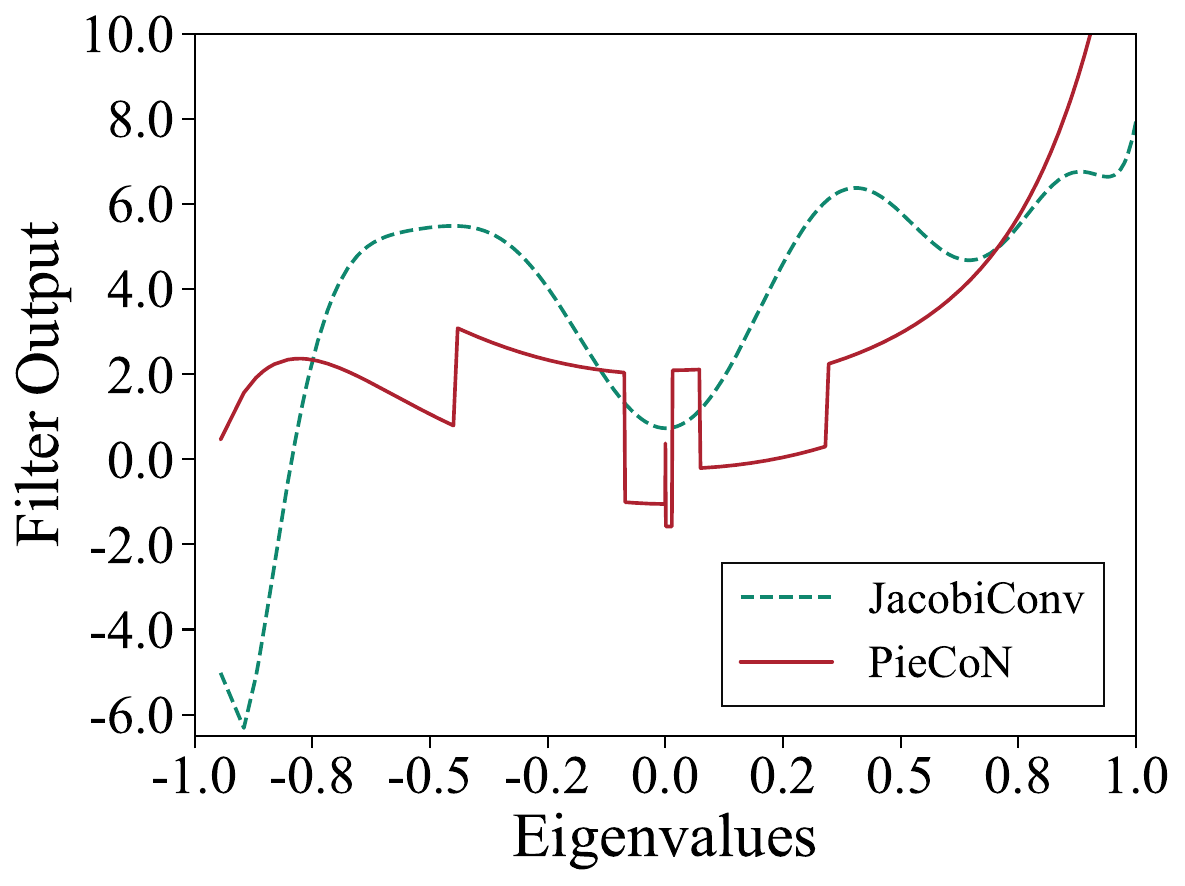}
    \caption{Comparison of JacobiConv and \method~trained filters on the Chameleon dataset.}
    \label{fig:filter_comparison}
    \vspace{-13pt}
\end{wrapfigure}
By partitioning the spectrum into intervals, \method~expands the range of spectral characteristics that can be learned, improving the model's performance. Figure~\ref{fig:filter_comparison} compares the response of a JacobiConv filter \citep{Jacobi} versus the response of a \method~filter across the spectrum. 
The figure shows how polynomial filters produce a smooth response, while piecewise constant filters can sharply focus on selected intervals, capturing crucial spectral properties that polynomial filters miss. For example, \method~can create a sharp drop at eigenvalue 0, which is impossible to achieve with low-degree polynomial filters like JacobiConv. This sharp discontinuity is crucial because eigenvalue 0 often has high multiplicity in real-world graphs (as detailed in Table \ref{tab:dataset_statistics}) and contains important structural information. Our theoretical analysis provides important insights about spectral GNNs by establishing error bounds for polynomial spectral filtering and proving that our model is invariant to eigenvector sign flips and basis changes. We validate our approach on standard benchmark datasets, showing improved performance, particularly when handling graphs with multiple zero eigenvalues.

Our main contributions are as follows:
\begin{itemize} 
\itemsep0em
\item We introduce a novel spectral GNN model (\method) that uses piecewise constant filters combined with polynomial filters to enhance learning from the spectral properties of graphs.
\item We propose a new method to isolate frequency bands in the spectral domain by partitioning the eigenvalue spectrum into intervals, allowing the model to focus on crucial spectral properties. 
\item We demonstrate, through experiments, that \method~outperforms or shows competitive performance against spatial and spectral GNNs on real-world datasets.
\end{itemize}

\section{Related Work}

GNNs have evolved significantly since the early work by \citet{bruna2014spectral}, who introduced the first modern spectral-based graph convolution network using the graph Fourier transform.
Later, \citet{GCN} simplified this approach with the Graph Convolutional Network (GCN) model, which applies a first-order approximation of spectral filters to make GNNs more scalable.
Other important advancements include Graph Attention Networks (GAT) \citep{GAT}, which use attention mechanisms to weigh neighboring nodes differently, and GraphSAGE \citep{hamilton2017inductive}, which focuses on inductive learning by generating node embeddings through sampling and aggregation techniques.

Message-passing neural networks (MPNNs) \citep{gilmer2017neural} are a class of GNNs where nodes iteratively exchange and aggregate information with their neighbors. 
Besides GCN and GAT, several other message-passing approaches have emerged to address specific challenges.
For instance, the Graph Isomorphism Networks (GIN) model \citep{GIN} is designed to be as powerful as the 1-WL test for distinguishing non-isomorphic graphs, thus improving expressiveness.

Spectral GNNs leverage the eigenvalues and eigenvectors of the graph Laplacian to perform graph convolution in the spectral domain. 
ChebNet \citep{ChebyNet} introduced the use of Chebyshev polynomials for spectral filtering, improving the scalability of spectral GNNs.
JacobiConv \citep{Jacobi} further enhances this by employing orthogonal Jacobi polynomials to flexibly learn graph filters. Recent work by \cite{maskey2023fractional} introduces a novel fractional graph Laplacian approach defined in the singular value domain to address over-smoothing, providing theoretical guarantees for both directed and undirected graphs.
For heterophilic graphs with label noise, R$^2$LP \citep{cheng2024resurrecting} demonstrates that increasing graph homophily can help mitigate the impact of noisy labels.

While previous spectral GNNs have leveraged polynomial filters to approximate spectral properties of graphs, they are limited by their lack of flexibility in handling critical eigenvalues because of the low degree of the polynomial filters. DSF \citep{guo2023gnn} addresses this by employing a shared network on positional encodings to learn unique polynomial coefficients per node, highlighting the advantages of node-specific filters over node-unified ones.
NFGNN \citep{zheng2023node} proposes a node-oriented spectral filtering approach that learns specific filters for each node, better adapting to local homophily patterns. 
PP-GNN \citep{lingam2021piece} also explores piecewise spectral filtering, but our approach differs in several key aspects. While PP-GNN uses a fixed two-part partitioning with low-degree polynomials and continuity constraints, \method~implements adaptive $K$-part partitioning with constant filters that allow for discontinuities between intervals. This design enables \method~to better capture critical eigenvalues with sharper responses. Additionally, our approach decomposes filters into positive and negative parts (Eq. \eqref{eq:main}) and maintains invariance to eigenvector sign flips and basis changes (Proposition \ref{prop:2}).

\section{Background}

We consider an undirected graph \( G = (\mathcal{V}, \mathcal{E}, \boldsymbol{X}) \) with \( n \) nodes. Here, \( \mathcal{V} = \{ v_1, v_2, \ldots, v_n \} \) is the set of nodes, \( \mathcal{E} \subseteq \mathcal{V} \times \mathcal{V} \) (\(|\mathcal{E}| = m\)) represents the set of edges, and \( \boldsymbol{X} \in \mathbb{R}^{n \times d} \) is the node feature matrix. The adjacency matrix \( \boldsymbol{A} \in \{0,1\}^{n \times n} \) of the graph is defined as \( \boldsymbol{A}_{ij} = 1 \) if there is an edge between nodes \( v_i \) and \( v_j \), and \( \boldsymbol{A}_{ij} = 0 \) otherwise.
The degree matrix \( \boldsymbol{D} = \operatorname{diag}(d_1, \ldots, d_n) \) is a diagonal matrix with the \( i \)-th diagonal entry as \( d_i = \sum_j \boldsymbol{A}_{ij} \), representing the degree of node \( i \).
The normalized adjacency matrix \( \boldsymbol{\hat{A}} \) is defined as $\boldsymbol{\hat{A}} = \boldsymbol{D}^{-\frac{1}{2}} \boldsymbol{A} \boldsymbol{D}^{-\frac{1}{2}}$.
The normalized adjacency matrix is symmetric and can be decomposed as $\boldsymbol{\hat{A}} = \boldsymbol{U} \boldsymbol{\Lambda} \boldsymbol{U}^\top$, where \( \boldsymbol{\Lambda} \) is a diagonal matrix containing the eigenvalues \( \lambda_i \) of \( \boldsymbol{\hat{A}} \), and \( \boldsymbol{U} \) is an orthogonal matrix whose columns are the corresponding eigenvectors \( \boldsymbol{u}_i \). Let \( s \) be the number of distinct eigenvalues of \( \boldsymbol{\hat{A}} \), denoted by \(\lambda^\prime_1, \lambda^\prime_2, \ldots, \lambda^\prime_s\). For an eigenvalue \( \lambda \), we define \( \nu(\lambda) \) as the \textit{algebraic multiplicity} of \( \lambda \), which is the number of times $\lambda$ appears in the eigenvalues. 
A graph signal is a function that assigns a scalar value to each node in the graph. Formally, a graph signal can be represented as a vector \( \boldsymbol{x} \in \mathbb{R}^n \), where each entry \( x_i \) corresponds to the signal value at node \( v_i \).

\subsection{Graph Neural Networks}
Graph Neural Networks (GNNs) are deep learning architectures designed to learn from graph-structured data. They can be broadly categorized into two types: spatial GNNs and spectral GNNs. Both approaches aim to learn node representations by leveraging the graph structure and node features but differ fundamentally in how they process graph information.

\paragraph{Graph Fourier Transform and Spectral GNNs.}
The graph Fourier transform \citep{GSP1} of a graph signal \( \boldsymbol{x} \in \mathbb{R}^n \) is defined as $\hat{\boldsymbol{x}} = \boldsymbol{U}^\top \boldsymbol{x}$, and its inverse is given by $\boldsymbol{x} = \boldsymbol{U} \hat{\boldsymbol{x}}$ \citep{shuman2013emerging}. This transform projects a graph signal from the spatial domain (node space) to the spectral domain (frequency space), similar to how the classical Fourier transform operates on time signals.

The spectral filtering of a signal \( \boldsymbol{x} \) with a kernel \( \boldsymbol{v} \) is defined as:
\begin{equation}
\boldsymbol{z} = \boldsymbol{v} *_{G} \boldsymbol{x} = \boldsymbol{U} \left(\hat{\boldsymbol{v}} \odot \hat{\boldsymbol{x}}\right) = \boldsymbol{U} \hat{\boldsymbol{V}} \boldsymbol{U}^\top \boldsymbol{x},
\end{equation}
where \( \hat{\boldsymbol{V}} = \operatorname{diag}(\hat{{v}}_1, \ldots, \hat{{v}}_n) \) represents the spectral kernel coefficients and \(\odot\) denotes element-wise multiplication.
Spectral GNNs leverage this graph Fourier transform to define convolution operations in the spectral domain. These models approach graph learning from a signal processing perspective, using the eigendecomposition of graph matrices to define filtering operations.

To avoid the computationally expensive eigen-decomposition, polynomial functions \( h(\boldsymbol{\hat{A}}) \) are often used to approximate different kernels in spectral GNNs. Specifically, the spectral filter \( h(\lambda) \) is parameterized as a polynomial of degree \( P \):
\begin{equation}
h(\lambda) = \sum_{p=0}^{P} \beta_p \lambda^p,
\end{equation}
where \( \beta_p \) are learnable coefficients. Consequently, the filtering process can be reformulated as:
\begin{equation}
h(\boldsymbol{\hat{A}}) \boldsymbol{x} = \sum_{p=0}^{P} \beta_p \boldsymbol{\hat{A}}^p \boldsymbol{x} = \boldsymbol{U} h(\boldsymbol{\Lambda})\boldsymbol{U}^\top \boldsymbol{x},
\end{equation}
which allows efficient computation of the filtered signal using only matrix multiplications.

\paragraph{Spatial GNNs.}
Spatial GNNs, also known as MPNNs, operate directly on the graph structure by aggregating information from neighboring nodes \citep{gilmer2017neural}. The general form of message passing can be expressed as:
\begin{equation}
\mathbf{h}_i^{(l+1)} = \text{UPDATE}\left(\mathbf{h}_i^{(l)}, \text{AGGREGATE}\left(\{\mathbf{h}_j^{(l)} : j \in \mathcal{N}(i)\}\right)\right),
\end{equation}
where $\mathbf{h}_i^{(l)}$ represents the feature vector of node $i$ at layer $l$, $\mathcal{N}(i)$ is the set of neighboring nodes of $i$, AGGREGATE is a permutation-invariant function that combines information from the neighbors, and UPDATE is a function that updates the node's representation.

\begin{figure}
\vspace{-10pt}
\begin{tikzpicture}
\node[anchor=south west,inner sep=0] at (0,0) {\includegraphics[width=\textwidth]{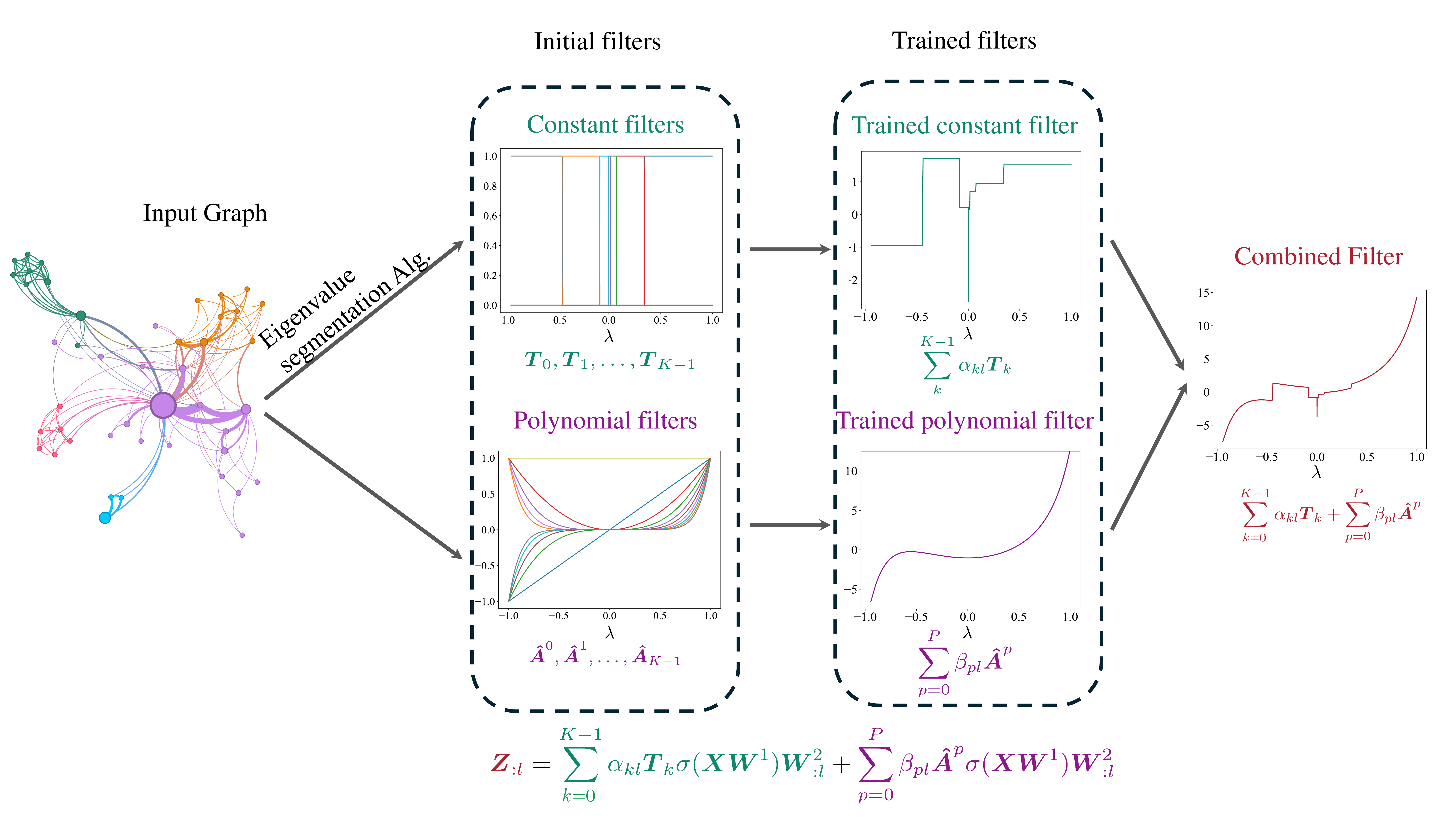}};
\node[rotate=38] at (4.94,6.60) { \ref{a1}};
\node at (13,0.74) {\eqref{eq:sub
_main}};
\end{tikzpicture}
\caption{Overview of the \method~model. Our method processes an input graph through eigenvalue segmentation (Alg. \ref{a1}) to create constant filters, while separately applying polynomial filters. These filters are trained and combined to create the final spectral filter.}
\label{fig:pipeline}
\end{figure}

Popular examples of MPNNs include the GCN model \citep{GCN}, which can be formulated as:
\begin{equation}
\mathbf{H}^{(l+1)} = \sigma\left(\boldsymbol{\hat{A}}\mathbf{H}^{(l)}\mathbf{W}^{(l)}\right),
\end{equation}
where $\mathbf{H}^{(l)}$ is the matrix of node representations at layer $l$, $\mathbf{W}^{(l)}$ is a learnable weight matrix, and $\sigma(\cdot)$ is a non-linear activation function.

\section{Piecewise Constant Spectral Graph Neural Network (\method)}

Current spectral GNNs often have limited flexibility in how they process graph structures due to their reliance on polynomial filters. We propose \method, a model that combines different types of spectral filters to process graph data in ways that complement the capabilities of polynomial filters.
The key steps of our methodology, illustrated in Fig.~\ref{fig:pipeline}, include: (1) partitioning the graph spectrum into intervals based on significant points identified through spectral analysis, (2) constructing constant spectral filters for each interval to capture global and local spectral properties, and (3) combining constant spectral filters with polynomial filters. Below, we describe the methodology in detail.
All the proofs of theorems and propositions are provided in Appendix \ref{app:proofs}.

\begin{wrapfigure}{r}{0.4\textwidth}
    \centering
  \includegraphics[width=0.4\columnwidth]{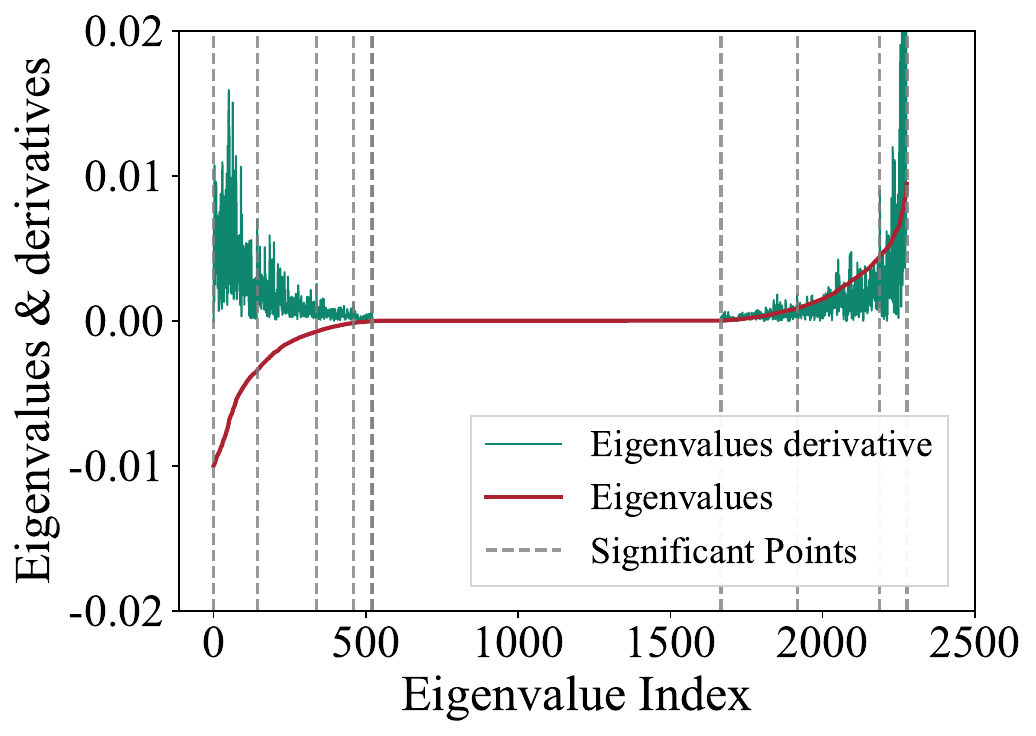}
  \caption{Using the derivative of eigenvalues to identify significant points which show relatively high changes in the spectrum.}
  \label{fig:limits}
  \vspace{-10pt}
\end{wrapfigure}

\subsection{Identifying Significant Points in the Spectrum}

Understanding the spectral properties of a graph is crucial for analyzing its structure. A key challenge is identifying significant points in the eigenvalue spectrum, which can reveal important structural insights (Fig. \ref{fig:limits}). Algorithm \ref{a1} addresses this by identifying large gaps in the eigenvalue spectrum, which can indicate distinct frequency bands in the graph's spectral representation and highlight structural changes or regions of high spectral variation \citep{VonLuxburg2007, Fiedler1973, chung1997}. By adaptively partitioning the eigenvalue space around these critical points, \method~can capture the most informative spectral features of the graph. 

Algorithm \ref{a1} identifies significant points in the eigenvalue spectrum by analyzing local patterns in the distribution of eigenvalues. For each position $i$ in the sorted sequence of eigenvalues, the algorithm quantifies how much eigenvalue $\lambda_i$ deviates from the statistical properties of its neighboring eigenvalues. Specifically, it computes the deviation of the discrete derivative \(d_i = \lambda_{i+1} - \lambda_i\) from the mean of derivatives within windows before and after position $i$, normalized by their respective standard deviations.
This normalization produces a significance score $s_i$ that measures how abnormal each eigenvalue gap is relative to its local context. Positions with the highest scores represent points where the spectrum exhibits sudden changes, effectively identifying natural boundaries between different structural components of the graph. The algorithm also handles eigenvalue multiplicity by assigning zero significance to positions where consecutive eigenvalues are identical, ensuring that clusters of repeated eigenvalues remain intact.
The identified significant points serve as adaptive partitioning boundaries for the spectrum, allowing \method~to focus computational resources on the most informative regions of the eigenvalue distribution. This partitioning preserves sign and basis invariance (as proven in Section \ref{sec:52}), making our approach robust to different eigenvector calculation methods. A parameter sensitivity analysis for Alg. \ref{a1} is presented in Appendix \ref{app:alg_sens}.

\begin{algorithm}[t]
\caption{Thresholding Algorithm for Identifying Significant Eigenvalue Gaps}
\label{a1}
\begin{algorithmic}[1]
\Function{Identify\_Significant\_Gaps}{$\boldsymbol{d}$, $\boldsymbol{\lambda}$, $w$, $K$}
    \Comment{$\boldsymbol{d}$: Discrete derivative of eigenvalues, $\boldsymbol{\lambda}$: Sorted eigenvalues, $w$: Window size for averaging, $K$: Number of top indices (Number of spectral intervals)}
    \State $\epsilon \gets \text{small constant}$ \Comment{A very small positive value to avoid division by zero}
    \State $\boldsymbol{s} \gets \boldsymbol{0}$
    \Comment Significance of each index
  \For{$i \gets w$ to $n - w - 1$}
        \State $\mu_p, \, \sigma_p \gets \text{mean}(\boldsymbol{d}_{i-w:i}), \, \text{std}(\boldsymbol{d}_{i-w:i})$ 
        \Comment{Mean and standard deviation before $i$}
        \State $\mu_n, \, \sigma_n \gets \text{mean}(\boldsymbol{d}_{i+1:i+w+1}), \, \text{std}(\boldsymbol{d}_{i+1:i+w+1})$ 
        \Comment{Mean and standard deviation after $i$}
        \State $s_i \gets \frac{|d - \mu_p|}{\sigma_p + \epsilon} + \frac{|d_i - \mu_n|}{\sigma_n + \epsilon}$
        \Comment{Sum of normalized distances to adjacent means}
        \If{$\lambda_i = \lambda_{i-1}$}
            \State $s_i \gets 0$
            \Comment{Set to zero if no gap exists}
        \EndIf
    \EndFor
    \State \(a_0 = 0\),  \(a_{K-1} = n+1\)
    \State \( a_1, a_2, \ldots, a_{K-2} \gets \) indices of the largest $K - 2$ values in $\boldsymbol{s}$
    \State \Return \( a_0, a_1, \ldots, a_{K-1}\)
\EndFunction
\end{algorithmic}
\end{algorithm}

\subsection{Construction of Constant Filters}

For each interval \([a_k, a_{k+1})\), we construct a spectral filter \( \boldsymbol{T}_k \), as shown in Fig. \ref{fig:const_filter}.
The filter is defined as:
\begin{equation}
\label{eq:const_filter}
\boldsymbol{T}_k = \boldsymbol{U} \boldsymbol{E}_k \boldsymbol{U}^\top = \boldsymbol{U}_{[:, a_k:a_{k+1}]} \boldsymbol{U}_{[:, a_k:a_{k+1}]}^\top,
\end{equation}
where \( \boldsymbol{E}_k \) is a binary diagonal matrix with non-zero entries corresponding to the eigenvalues within the \( k \)-th interval and $\boldsymbol{U}_{[:, a_k:a_{k+1}]}$ is the submatrix of $\boldsymbol{U}$ with columns $a_k$ to $a_{k+1} - 1$. Specifically, the matrix \( \boldsymbol{E}_k \) for an interval \([a_k, a_{k+1})\) is defined as:
\begin{equation}
\boldsymbol{E}_k = \operatorname{diag}(0, \ldots, \underbrace{1}_{a_k}, 1, \ldots, \underbrace{1}_{a_{k+1} - 1}, \ldots, 0). 
\end{equation}
This construction ensures that \( \boldsymbol{E}_k \) captures the eigenvectors in the specified interval, allowing the filter \( \boldsymbol{T}_k \) to encapsulate the corresponding spectral properties.
Our approach differs from traditional polynomial-based spectral filters by enabling more flexible and tailored filtering of the graph's spectral components.

\subsection{Polynomial Filters}

In addition to the spectral filters \( \boldsymbol{T}_k \), polynomial filters of the form \( \boldsymbol{\hat{A}}^p \) are used (Fig. \ref{fig:mon_filter}).
These filters provide a way to incorporate local neighborhood information into the model.
By adjusting the polynomial degree \( p \), we can capture varying scales of locality in the graph.

Using the distinct eigenvalues of \( \boldsymbol{\hat{A}} \) we can get:
\begin{equation}
(\boldsymbol{\hat{A}} - \lambda^\prime_1 \boldsymbol{I}) \cdots (\boldsymbol{\hat{A}} - \lambda^\prime_s \boldsymbol{I}) = 0.
\end{equation}
This implies that polynomial filters have at most \( s \) free parameters because any polynomial of degree higher than $s$ can be reduced to a polynomial of degree $s$.
The inclusion of polynomial filters complements the constant spectral filters by providing a smooth interpolation between different spectral components. 
This combination allows \method~to capture both sharp and gradual changes in the graph's spectral properties.

\subsection{Combining Constant and Polynomial Filters}

\begin{figure}[!tp]
  \centering
  \begin{minipage}[b]{0.45\textwidth}
    \includegraphics[width=\textwidth]{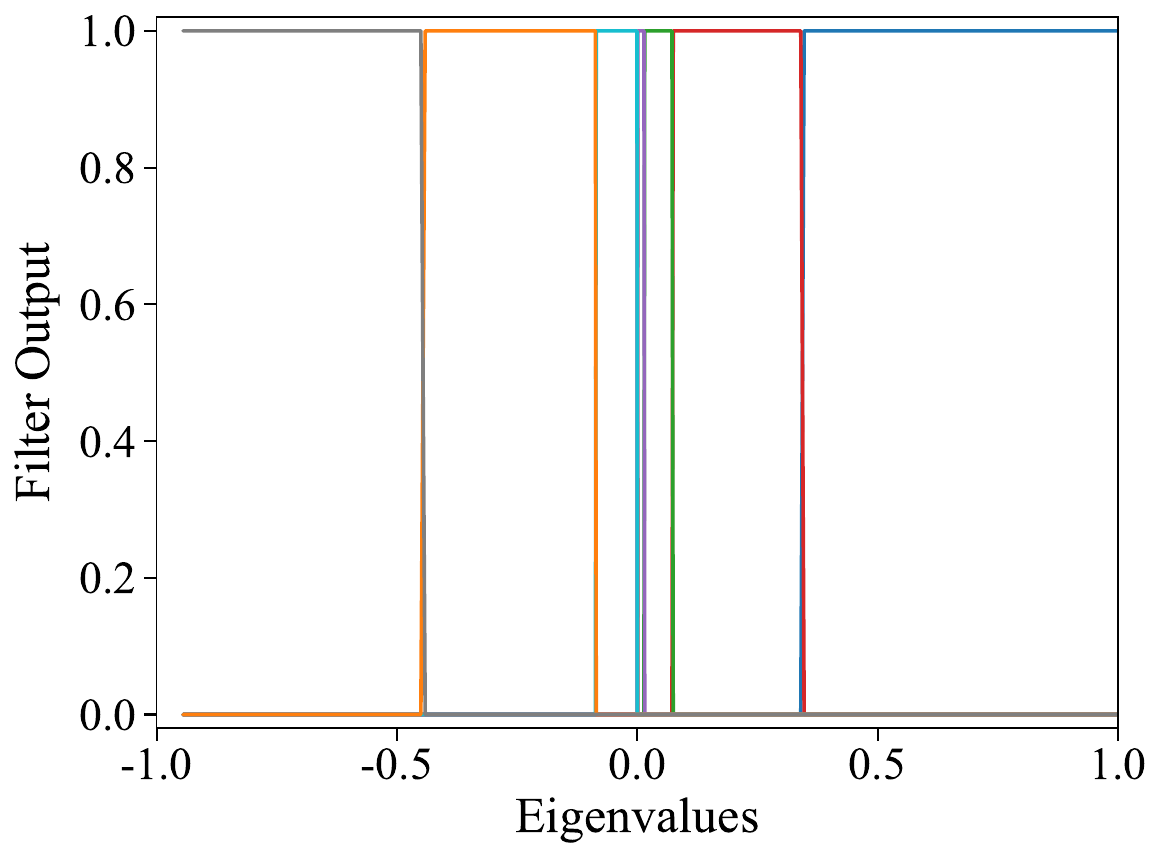}
    \caption{Constant filters.}
    \label{fig:const_filter}
  \end{minipage}
  \hspace{1cm}
  \begin{minipage}[b]{0.45\textwidth}
    \includegraphics[width=\textwidth]{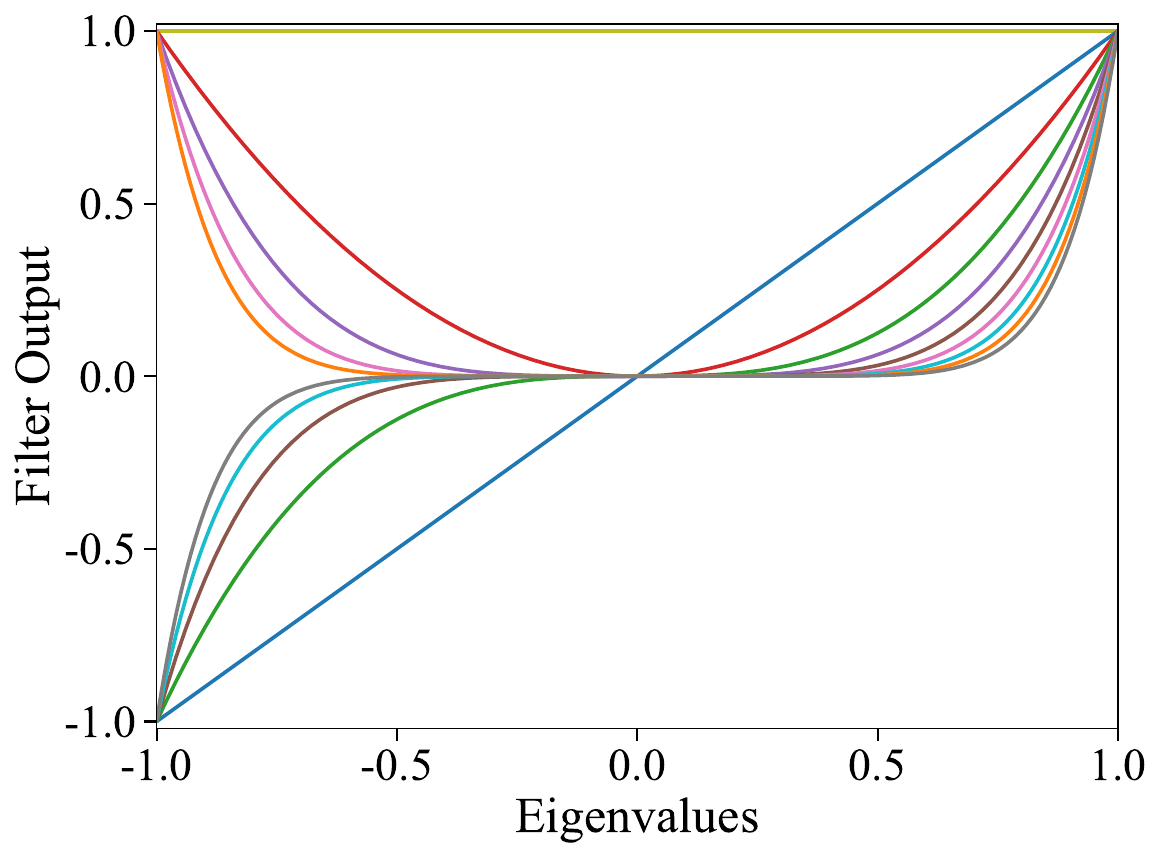}
    \caption{Polynomial filters.}
    \label{fig:mon_filter}
  \end{minipage}
\end{figure}

The final embedding matrix of the nodes is computed by combining the constant spectral filters and polynomial filters.
As in Jacobi convolutions \citep{Jacobi}, we apply independent filtering on each of the channels in \( \boldsymbol{X} \) simultaneously:
\begin{equation}
\label{eq:sub
_main}
\boldsymbol{Z}_{:l} = \underbrace{\sum_{k = 0}^{K-1} \alpha_{kl} \boldsymbol{T}_k  \sigma (\boldsymbol{X} \boldsymbol{W}^1) \boldsymbol{W}^{2}_{:l}}_{\text{Constant Filters}} + \underbrace{\sum_{p=0}^{P} \beta_{pl} \boldsymbol{\hat{A}}^p \sigma(\boldsymbol{X} \boldsymbol{W}^1) \boldsymbol{W}^{2}_{:l}}_{\text{Polynomial Filters}},
\end{equation}
where:
\begin{itemize}
    \itemsep0em
    \item $\mathbf{Matrix}_{:l}$ is the $l^{th}$ column of the $\mathbf{Matrix}$;
    \item \( \boldsymbol{X} \) is the input feature matrix, representing the initial features of the nodes;
    \item \( \boldsymbol{W}^1 \) and \( \boldsymbol{W}^2 \) are weight matrices to be learned during training. \( \boldsymbol{W}^1 \) maps the input features to an intermediate space, and \( \boldsymbol{W}^2 \) maps the intermediate representations to the embedding space;
    \item \( \sigma(\cdot) \) is a non-linear activation function applied element-wise, introducing non-linearity into the model;
    \item \( \alpha_{kl} \) are learnable coefficients associated with the spectral filters \( \boldsymbol{T}_k \) for each dimension \( l \);
    \item \( \beta_{pl} \) are learnable coefficients associated with the polynomial filters \( \boldsymbol{\hat{A}}^p \) for each dimension \( l \).
\end{itemize}

Each element of \( \boldsymbol{T}_k\) represents a similarity between two nodes in some range of frequencies.
When we perform the matrix multiplication $\boldsymbol{M}\boldsymbol{X}$ for some similarity matrix $\boldsymbol{M} = \boldsymbol{T}_k$, each entry $\boldsymbol{M}_{ij}$ in the similarity matrix $\boldsymbol{M}$ represents the weight or importance of node $j$ in contributing to the feature vector of node $i$.
If $\boldsymbol{M}$ is non-negative, it means each node contributes either positively or not at all to the feature aggregation.
Negative values, on the other hand, would imply subtracting features from neighbors, which is typically not meaningful in most graph-based learning contexts, where the goal is to aggregate features to enhance node representations.
Therefore, we separate $\boldsymbol{T}_k$ into positive and negative parts, as follows:
\begin{align}
\label{eq:main}
\boldsymbol{Z}_{:l} = \underbrace{\sum_{k = 0}^{K-1} \alpha_{kl}^{+} (\boldsymbol{T}_k)^{+}  \sigma (\boldsymbol{X} \boldsymbol{W}^1) \boldsymbol{W}^{2}_{:l}}_{\text{Positive Part}} + \underbrace{\sum_{k = 0}^{K-1}\alpha_{kl}^{-} (\boldsymbol{T}_k)^{-}  \sigma (\boldsymbol{X} \boldsymbol{W}^1) \boldsymbol{W}^{2}_{:l}}_{\text{Negative Part}} + \underbrace{\sum_{p=0}^{P} \beta_{pl} \boldsymbol{\hat{A}}^p \sigma(\boldsymbol{X} \boldsymbol{W}^1) \boldsymbol{W}^{2}_{:l}}_{\text{Polynomial Filters}},
\end{align}
where  \( \alpha_{kl}^{+} \) and \( \alpha_{kl}^{-} \) are learnable coefficients associated with the spectral filters \( \boldsymbol{T}_k^{+} \) and \( \boldsymbol{T}_k^{-} \) for each dimension \( l \). The superscripts + and - indicate coefficients for the positive and negative parts of the spectral filters, respectively.

It is important to note that while our method could technically be classified as a piecewise polynomial approach, we do not construct separate polynomial functions for each interval. Instead, our approach combines constant functions within specific spectral intervals with global polynomial filters.

\subsection{Computational Complexity}

The computational complexity of our method is broken down as follows:
\begin{enumerate}
    \itemsep0em 
    \item \textit{Eigendecomposition  (precomputation):} \( \mathcal{O}(n^3)\) for computing spectral components.
    \item \textit{Filter construction and sparsification (precomputation):} \( \mathcal{O}(n^3 + 2K n^2 \log(m)) = \mathcal{O}(n^3)\) for constructing and sparsifying \( T_K \) filters, where \( K \) is the number of spectral intervals, and \( m \) is the edge count.
    \item \textit{Model propagation:} \( \mathcal{O}((K + P)md) \) during training and inference, where \( P \) is the polynomial degree, and \( d \) is the feature dimension. This matches the theoretical time complexity of JacobiConv \citep{Jacobi} and GPRGNN \citep{GPR-GNN}, while being more efficient than BernNet's \citep{BernNet} \(\mathcal{O}(P^2md)\). An empirical comparison of the computational efficiency between our approach and the polynomial filtering method JacobiConv is provided in Appendix \ref{app:time}.
\end{enumerate}
It is worth noting that while spectral methods like ChebNet \citep{ChebyNet}, BernNet \citep{BernNet}, and JacobiConv \citep{Jacobi} avoid explicit eigendecomposition, they still face challenges when using higher-degree polynomials (needed for better approximation), as the recursive computation of $\boldsymbol{\hat{A}}^k \boldsymbol{x}$ becomes computationally intensive in practice.

\section{Theoretical Analysis}

This section provides theoretical and empirical analyses to establish the advantages of piecewise constant spectral filters over polynomial filters and validate the design choices of \method.
For the theoretical part, we explain an error bound from polynomial approximation theory, which shows the limits of polynomial spectral filtering when dealing with sharp changes in functions.
We discuss the challenges with eigenvector representations, such as their invariance under sign flips and basis shifts, which can affect the generalization of graph learning models.
Additionally, in Appendix \ref{app:eig_zero}, we analyze how specific graph structures influence the eigenvalue spectrum, with a focus on the eigenvalue $0$, and explain why separating constant filters into negative and positive parts helps improve model performance and reduces approximation errors.

\subsection{Error Analysis for Polynomial Approximation}

To analyze the fundamental limitations of polynomial approximations in spectral filtering, we establish a theorem characterizing the error bounds.

\begin{theorem}[Approximation error for $\epsilon$-dense eigenvalues]
Let $\boldsymbol{\hat{A}} \in \mathbb{R}^{n \times n}$ be a normalized adjacency matrix with spectrum $\{\lambda_i\}_{i=1}^n$ where $-1 \leq \lambda_1 \leq \cdots \leq \lambda_n \leq 1$. Assume that these eigenvalues are $\epsilon$-dense on $[-1,1]$ and that $d^2\epsilon<1$. Let $f:[-1,1] \to \mathbb{R}$ be a filter function with $\|f\|_\infty= \sup_{x \in [-1,1]} |f(x)| = 1$. For any polynomial \(p \in \mathcal{P}_d\)(the space of polynomials of degree at most \(d\)), the approximation error is: 
\begin{equation}
\mathcal{E}(p,f) = \sum_{i=1}^n |p(\lambda_i) - f(\lambda_i)| \geq  \|p\|_\infty(1 - d^2\epsilon) - 1. 
\end{equation}
\end{theorem}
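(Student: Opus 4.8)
The plan is to leverage a Markov-type inequality that controls the growth of a polynomial on $[-1,1]$ in terms of its supremum norm, combined with the hypothesis that the eigenvalues densely sample the interval. The key tool is the Markov brothers' inequality: for $p \in \mathcal{P}_d$, we have $\|p'\|_\infty \leq d^2 \|p\|_\infty$ on $[-1,1]$. I would first use this to argue that $p$ cannot vary too quickly: if $\|p\|_\infty = M$ is attained at some point $x^\star \in [-1,1]$, then for any $x$ with $|x - x^\star| \leq \delta$ we have $|p(x)| \geq M - d^2 M \delta = M(1 - d^2\delta)$ by the mean value theorem applied to $p$.

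Next I would connect this to the eigenvalues. Since the $\{\lambda_i\}$ are $\epsilon$-dense on $[-1,1]$, there is at least one eigenvalue $\lambda_{i_0}$ within distance $\epsilon$ of $x^\star$ (here I would rely on the precise definition of $\epsilon$-density as it appears in the paper — presumably that every point of $[-1,1]$ is within $\epsilon$ of some $\lambda_i$). Then $|p(\lambda_{i_0})| \geq M(1 - d^2\epsilon)$, and since $d^2\epsilon < 1$ this quantity is positive. Using the triangle inequality and $\|f\|_\infty = 1$, which forces $|f(\lambda_{i_0})| \leq 1$, I get $|p(\lambda_{i_0}) - f(\lambda_{i_0})| \geq |p(\lambda_{i_0})| - |f(\lambda_{i_0})| \geq M(1-d^2\epsilon) - 1$. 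Since $\mathcal{E}(p,f)$ is a sum of nonnegative terms, it is at least the single term indexed by $i_0$, giving $\mathcal{E}(p,f) \geq \|p\|_\infty(1 - d^2\epsilon) - 1$, which is the claimed bound.

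The main obstacle — or rather the point requiring the most care — is the passage from "$\|p\|_\infty$ attained at $x^\star$" to a lower bound on $|p|$ in a neighborhood via Markov's inequality. The naive mean value estimate $|p(x) - p(x^\star)| \leq \|p'\|_\infty |x - x^\star|$ requires $p'$ bounded by $d^2 M$ on the whole segment between $x$ and $x^\star$, which Markov's inequality indeed provides on all of $[-1,1]$; so this step is clean provided $x^\star$ and $\lambda_{i_0}$ both lie in $[-1,1]$, which they do. A secondary subtlety is whether $x^\star$ could be an endpoint, in which case the $\epsilon$-neighborhood might stick out of $[-1,1]$ — but $\epsilon$-density still guarantees an eigenvalue within $\epsilon$ of $x^\star$ inside $[-1,1]$, so the estimate survives. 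I would also double-check the edge case where $\|p\|_\infty \leq 1/(1-d^2\epsilon)$, in which the right-hand side is $\leq 0$ and the inequality is trivially true, so no contradiction arises with $\mathcal{E} \geq 0$.
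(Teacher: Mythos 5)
Your proposal is correct and follows essentially the same route as the paper's proof: locate the maximizer $x^\star$ of $|p|$, use $\epsilon$-density to find a nearby eigenvalue, control the variation of $p$ via Markov's inequality plus the mean value theorem, and then lower-bound a single term of the sum using $\|f\|_\infty = 1$. The additional remarks on endpoints and the trivial case where the right-hand side is nonpositive are fine but not needed beyond what the paper's argument already covers.
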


Generally, as $n$ increases, the eigenvalues become more densely packed in $[-1,1]$, causing $\epsilon$ to approach zero. This means $d^2\epsilon$ will also approach zero, making the lower bound on the approximation error converge to $\|p\|_\infty - 1$. Therefore, to minimize the error bound while maintaining sufficient approximation power, setting $\|p\|_\infty \leq 1$ is a natural choice since the target function satisfies $\|f\|_\infty = 1$. This normalization allows for fair comparison between different polynomial approximations and simplifies the analysis. Hence, in the following theorem, we specifically focus on polynomials with $\|p\|_\infty \leq 1$ to analyze the particular challenge of approximating functions with jump discontinuities.

\begin{theorem}[Approximation error for functions with jump discontinuities]
Let $\boldsymbol{\hat{A}} \in \mathbb{R}^{n \times n}$ be a normalized adjacency matrix with spectrum $\{\lambda_i\}_{i=1}^n$ where $-1 \leq \lambda_1 \leq \cdots \leq \lambda_n \leq 1$. Let \(f:[-1,1] \to \mathbb{R}\) be some filter function with $\|f\|_\infty= 1$ that the model needs to find and suppose it has a jump discontinuity of magnitude \(h > 0\) between consecutive eigenvalues \(\lambda_R\) and \(\lambda_{R+1}\) $(|f(\lambda_{R+1}) - f(\lambda_R)| = h )$. For any polynomial \(p \in \mathcal{P}_d\)(the space of polynomials of degree at most \(d\)), satisfying \(\|p\|_\infty \leq 1\), the approximation error is: 
\begin{equation}
\mathcal{E}(p,f) = \sum_{i=1}^n |p(\lambda_i) - f(\lambda_i)| \geq h - |\lambda_{R+1} - \lambda_R| \cdot d^2. 
\end{equation}
\end{theorem}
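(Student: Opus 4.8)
The plan is to isolate the two eigenvalues straddling the discontinuity and bound how fast the polynomial $p$ can move between them. First I would discard all but two terms from the sum defining $\mathcal{E}(p,f)$: since every summand is nonnegative,
\begin{equation}
\mathcal{E}(p,f) = \sum_{i=1}^n |p(\lambda_i) - f(\lambda_i)| \;\geq\; |p(\lambda_R) - f(\lambda_R)| + |p(\lambda_{R+1}) - f(\lambda_{R+1})|.
\end{equation}
Then I would apply the triangle inequality in the form $|a| + |b| \geq |(f(\lambda_{R+1}) - f(\lambda_R)) - (p(\lambda_{R+1}) - p(\lambda_R))| \geq |f(\lambda_{R+1}) - f(\lambda_R)| - |p(\lambda_{R+1}) - p(\lambda_R)|$, where $a = f(\lambda_R) - p(\lambda_R)$ and $b = p(\lambda_{R+1}) - f(\lambda_{R+1})$. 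By hypothesis the first term is exactly $h$, so $\mathcal{E}(p,f) \geq h - |p(\lambda_{R+1}) - p(\lambda_R)|$.

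The remaining task is to show $|p(\lambda_{R+1}) - p(\lambda_R)| \leq d^2 |\lambda_{R+1} - \lambda_R|$. This is where the degree constraint and the bound $\|p\|_\infty \leq 1$ enter: by the Markov brothers' inequality, any $p \in \mathcal{P}_d$ with $\sup_{x \in [-1,1]}|p(x)| \leq 1$ satisfies $\sup_{x \in [-1,1]} |p'(x)| \leq d^2$. Combining this with the mean value theorem on the interval $[\lambda_R, \lambda_{R+1}] \subseteq [-1,1]$ gives $|p(\lambda_{R+1}) - p(\lambda_R)| = |p'(\xi)|\,|\lambda_{R+1} - \lambda_R| \leq d^2 |\lambda_{R+1} - \lambda_R|$ for some $\xi$ between $\lambda_R$ and $\lambda_{R+1}$. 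Substituting into the previous display yields $\mathcal{E}(p,f) \geq h - |\lambda_{R+1} - \lambda_R|\cdot d^2$, as claimed.

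The argument is short and the only nontrivial ingredient is Markov's inequality; I would state it explicitly (with a reference) rather than reprove it, since the Chebyshev-polynomial extremal argument behind it is standard. The one point worth a sentence of care is that $\lambda_R$ and $\lambda_{R+1}$ both lie in $[-1,1]$ — which holds because the spectrum of the normalized adjacency matrix is contained in $[-1,1]$ — so that both the Markov bound and the mean value theorem apply on a subinterval of $[-1,1]$. No genuine obstacle is expected; the interest of the statement is interpretive (a degree-$d$ filter cannot resolve a jump of size $h$ across a gap narrower than $h/d^2$), not technical.
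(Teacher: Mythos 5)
Your proposal is correct and follows essentially the same route as the paper's proof: restrict the sum to the two terms at $\lambda_R$ and $\lambda_{R+1}$, apply the triangle inequality, and bound $|p(\lambda_{R+1})-p(\lambda_R)|$ via Markov's inequality combined with the mean value theorem. The only difference is presentational — you spell out the triangle-inequality step and the $[-1,1]$ containment slightly more explicitly than the paper does.
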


This result demonstrates a key limitation: For small $d$ when $|\lambda_{R+1} - \lambda_R| \ll \frac{h}{d^2}$, the approximation error $\mathcal{E}(p,f) \geq \mathcal{E}_2 \approx h$. This is particularly problematic for spectral filtering where: (1) sharp transitions in the filter response are often desired ($h$ is large), (2) some eigenvalues may be very close together ($|\lambda_{R+1} - \lambda_R|$ is small), and (3) using high-degree polynomials is computationally expensive.

In contrast, by using piecewise constant filters, we can add a constant filter only at point \(\lambda_{R+1}\) with the value \(h\) and eliminate the jump entirely. The ablation study in Table \ref{tab:ablation} demonstrates that adding constant filters to polynomial filters improves performance. This theoretical result justifies combining polynomial filters with constant filters in our approach.

\subsection{Sign and Basis Invariance}
\label{sec:52}

Eigenvectors corresponding to a given eigenvalue can have multiple representations.
For example, if \( \lambda = 0 \) is an eigenvalue of the normalized adjacency matrix \( \boldsymbol{\hat{A}} \), any eigenvector \( \boldsymbol{u}_i \) associated with \( \lambda = 0 \) can be replaced with its opposite \( -\boldsymbol{u}_i \) or any linear combination of eigenvectors for the same eigenvalue.
This variability introduces sign and basis ambiguity problems, which can lead to inconsistent or unpredictable results in learning tasks.

\begin{proposition}[This follows directly from \citep{lim2023sign}]
Polynomial filters are invariant to sign changes and basis choices.
\end{proposition}

The invariance of polynomial filters stems from the stability of eigenvector products \( \boldsymbol{U}_{\mu} \boldsymbol{U}_{\mu}^\top \) under different basis representations. This property ensures that matrix powers \( \boldsymbol{\hat{A}}^p \) maintain consistent behavior regardless of the specific eigenbasis chosen, making polynomial filters reliable for spectral graph operations.

\begin{proposition}
\label{prop:2}
Constant filters are invariant to sign changes and basis choices.
\end{proposition}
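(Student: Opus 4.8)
The plan is to show that each constant filter $\boldsymbol{T}_k = \boldsymbol{U}_{[:,a_k:a_{k+1}]}\boldsymbol{U}_{[:,a_k:a_{k+1}]}^\top$ is in fact a sum of eigenprojectors $\boldsymbol{U}_\mu \boldsymbol{U}_\mu^\top$ over the distinct eigenvalues whose indices fall inside the interval $[a_k,a_{k+1})$, and then invoke the same stability argument used for Proposition~1. First I would recall that for a distinct eigenvalue $\lambda'_j$ with eigenspace of dimension $\nu(\lambda'_j)$, the orthogonal projector $\boldsymbol{P}_j$ onto that eigenspace is basis-independent: if $\boldsymbol{U}_\mu$ and $\tilde{\boldsymbol{U}}_\mu = \boldsymbol{U}_\mu \boldsymbol{Q}$ are two orthonormal bases of the same eigenspace (with $\boldsymbol{Q}$ orthogonal, covering both sign flips and general basis rotations), then $\tilde{\boldsymbol{U}}_\mu \tilde{\boldsymbol{U}}_\mu^\top = \boldsymbol{U}_\mu \boldsymbol{Q}\boldsymbol{Q}^\top \boldsymbol{U}_\mu^\top = \boldsymbol{U}_\mu \boldsymbol{U}_\mu^\top = \boldsymbol{P}_j$. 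This is precisely the fact cited from \citep{lim2023sign} behind Proposition~1.

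Next I would argue that the partition boundaries $a_k$ produced by Algorithm~\ref{a1} never split an eigenspace: the algorithm sets $s_i \gets 0$ whenever $\lambda_i = \lambda_{i-1}$, so a boundary index $a_k$ is only ever selected at a position where $\lambda_{a_k} \neq \lambda_{a_k - 1}$, i.e.\ at the start of a new distinct eigenvalue. Consequently the column block $[a_k, a_{k+1})$ of $\boldsymbol{U}$ consists of full blocks of eigenvectors: there is a set $S_k$ of distinct-eigenvalue indices such that $\boldsymbol{T}_k = \sum_{j \in S_k} \boldsymbol{P}_j$. Since each $\boldsymbol{P}_j$ is invariant under sign flips and basis changes within its eigenspace, and the whole decomposition $\boldsymbol{\hat{A}} = \sum_j \lambda'_j \boldsymbol{P}_j$ is canonical, $\boldsymbol{T}_k$ is invariant as well. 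The positive and negative parts $(\boldsymbol{T}_k)^+$ and $(\boldsymbol{T}_k)^-$ are then obtained by an entrywise operation on the (now uniquely determined) matrix $\boldsymbol{T}_k$, so they inherit the invariance, and hence so does the entire embedding in Eq.~\eqref{eq:main}, given that the polynomial part is already invariant by Proposition~1.

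The main subtlety — and the step I would be most careful about — is the interaction between the algorithm's \emph{data-dependent} choice of boundaries and the invariance claim: one must confirm that the significance scores $s_i$, and therefore the selected indices $a_k$, depend only on the sorted eigenvalues $\boldsymbol{\lambda}$ (and the discrete derivatives $\boldsymbol{d}$), not on the eigenvectors at all. Inspecting Algorithm~\ref{a1}, the inputs are exactly $\boldsymbol{d}$, $\boldsymbol{\lambda}$, $w$, $K$, so the partition is a deterministic function of the spectrum; since the spectrum is itself invariant (eigenvalues are intrinsic to $\boldsymbol{\hat{A}}$), the same partition is recovered under any eigenbasis choice, and the argument above goes through. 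I would also note the harmless edge case where two distinct eigenvalues are numerically tied up to the algorithm's tolerance — this only affects which coarsening of the eigenspace decomposition is used, never whether an individual eigenspace is split, so invariance is unaffected. Finally I would remark that no assumption about eigenvalue multiplicities is needed: even a simple eigenvalue yields a rank-one projector $\boldsymbol{u}_i \boldsymbol{u}_i^\top = (-\boldsymbol{u}_i)(-\boldsymbol{u}_i)^\top$, so sign invariance holds trivially there, and the genuinely new content over Proposition~1 is only the ``boundaries respect eigenspaces'' observation.
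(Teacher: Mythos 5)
Your proof is correct and follows essentially the same route as the paper's: it decomposes $\boldsymbol{T}_k$ into full eigenspace projectors (using the fact that Algorithm~\ref{a1} zeroes the score $s_i$ whenever $\lambda_i=\lambda_{i-1}$, so boundaries never split an eigenspace) and then invokes the invariance of $\boldsymbol{U}_{\mu}\boldsymbol{U}_{\mu}^\top$ under orthogonal basis changes, exactly as in the paper. Your additional observations --- that the partition depends only on the (intrinsic) eigenvalues and that the positive/negative splits and Eq.~\eqref{eq:main} inherit the invariance --- are correct refinements the paper leaves implicit, not a different argument.
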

 
Both polynomial and constant filters are robust to different eigenvector representations, ensuring that learned representations are not affected by arbitrary sign changes or basis choices, thus improving model stability and generalization. By leveraging filters with this characteristic, our model can produce consistent outputs despite the inherent ambiguities in eigendecomposition.

\section{Experimental Evaluation}

\subsection{Datasets}

\begin{table*}
    \centering
  \caption{Statistics of the datasets used for node classification. \( \nu(0) \) is the multiplicity of eigenvalue \(0\).}
  \label{tab:dataset_statistics}
    \begin{tabular}{lccccc}
    \toprule
    \textbf{Dataset} & \textbf{Nodes} & \textbf{Edges} & \textbf{Classes} & \textbf{Homophily Ratio}  & \textbf{$\nu(0)/$}\textbf{Nodes}\\
    \midrule
    Chameleon & 2,277 & 31,396 & 5 & 0.23 & 0.52\\
    Squirrel & 5,201 & 198,423  & 5 & 0.22 & 0.37\\
    Actor & 7,600 & 26,705 & 5 & 0.22 & 0.15\\
    Amazon-Ratings & 24,492 & 93,050 & 5 & 0.38 & 0.17 \\
    Texas & 183 & 168 & 5 & 0.11 & 0.35\\
    Cora & 2,708 & 5,278 & 7 & 0.81 & 0.11\\
    Citeseer & 3,327 & 4,614  & 6 & 0.74 & 0.14\\
    Amazon-Photo & 7,650 & 71,831 & 8 & 0.83 & 0.02\\
    Pubmed & 19,717 & 44,324  & 3 & 0.80 & 0.61\\
    \bottomrule
    \end{tabular}
\end{table*}

We evaluate \method~on seven diverse node classification datasets with varying graph structures and homophily ratios (Table \ref{tab:dataset_statistics}).
Cora, Citeseer, and Pubmed are citation networks where nodes are research papers and edges represent citations.
Photo is a product co-occurrence graph with nodes as products and edges representing co-purchase relationships.
Actor is a graph where nodes are actors and edges denote co-occurrence in films.
Chameleon and Squirrel are graphs derived from Wikipedia pages. Nodes represent web pages, and edges denote mutual links. Texas is an academic web graph where nodes are webpages from the University of Texas and edges represent hyperlinks between pages. Amazon-Ratings is a product co-purchasing network where nodes are products and edges indicate frequent co-purchases, with the task of predicting product rating classes. The ratio of eigenvalue 0 multiplicity to the number of nodes is shown in the last column of Table \ref{tab:dataset_statistics}.

All datasets were randomly split into 60\% training, 20\% validation, and 20\% test sets for 10 different seeds. For each dataset, we report the average performance along with the 95\% confidence interval. Details about hyperparameter optimization and the running environment are provided in Appendix \ref{app:hyperparameters}.

\subsection{Baseline Models}

We compare \method~against several baseline models categorized into different groups based on their underlying graph learning methodologies:
\begin{itemize}
    \item \textbf{Spatial-based GNNs}: Graph Convolutional Network (GCN) \citep{GCN}, Graph Attention Network (GAT) \citep{GAT}, Higher-order GCN (H$_2$GCN) \citep{H2GCN}, and GCNII \citep{GCNII}.
    \item \textbf{Spectral-based GNNs}: UniFilter \citep{huang2024how}, LanczosNet \citep{LanczosNet}, ChebyNet \citep{ChebyNet}, Generalized PageRank GNN (GPR-GNN) \citep{GPR-GNN}, BernNet \citep{BernNet}, PP-GNN \citep{lingam2021piece}, ChebNetII \citep{Chebyshev2}, DSF-Jacobi-R \citep{guo2023gnn}, JacobiConv \citep{Jacobi}, OptBasisGNN \citep{OptBasisGNN}, and Specformer \citep{Specformer}.
    \item \textbf{Free eigenvalues}: A graph neural network that learns a spectral filter by directly parameterizing the eigenspectrum $\boldsymbol{\hat{A}} = \boldsymbol{U} \boldsymbol{\Lambda} \boldsymbol{U}^\top$, where $\boldsymbol{\Lambda}$ contains trainable eigenvalues.
\end{itemize}

\begin{table}[!tp]
  \caption{Results on real-world node classification tasks.}
  \label{tab:node_cla_pfm}
  \centering
  \resizebox{\linewidth}{!}{
    \begin{tabular}{lccccccccc}
    \toprule
     \multirow{2}{*}{\textbf{Model}}      & \multicolumn{5}{c}{\textbf{Heterophilic}} & \multicolumn{4}{c}{\textbf{Homophilic}} \\
\cmidrule(l){2-6}
\cmidrule(l){7-10}
    & \rotatebox{45}{\textbf{Chameleon}} & \rotatebox{45}{\textbf{Squirrel}} & \rotatebox{45}{\textbf{Actor}} & 
    \rotatebox{45}{\textbf{Amazon-Ratings}} & \rotatebox{45}{\textbf{Texas}} & \rotatebox{45}{\textbf{Cora}}  & \rotatebox{45}{\textbf{Citeseer}} & \rotatebox{45}{\textbf{Amazon-Photo}} &
    \rotatebox{45}{\textbf{Pubmed}}\\
    \midrule
    \multicolumn{8}{c}{Spatial-based GNNs} \\
    \midrule
    GCN & $68.10^{***}_{\pm1.20}$  & $50.11^{***}_{\pm1.21}$ & $34.65^{***}_{\pm0.68}$ & $48.80^{***}_{\pm0.22}$ & $78.69^{***}_{\pm1.80}$ & $87.18^{***}_{\pm0.87}$ & $ 81.04_{\pm0.67}$  & $85.87^{***}_{\pm0.83}$ & $87.31^{***}_{\pm0.31}$ \\
    GAT         & $63.13^{***}_{\pm1.93}$ & $44.49^{***}_{\pm0.88}$ & $33.93^{***}_{\pm2.47}$ & $50.28^{***}_{\pm0.55}$ & $77.54^{***}_{\pm0.98}$ &$88.03^{*}_{\pm0.79}$ & $80.52_{\pm0.71}$ & $90.94^{***}_{\pm0.68}$ & $87.29^{***}_{\pm0.48}$\\
    H$_2$GCN    & $57.11^{***}_{\pm1.58}$ & $36.42^{***}_{\pm1.89}$ & $35.86^{***}_{\pm1.03}$ & $48.17^{***}_{\pm0.52}$ & $88.36_{\pm2.62}$ &$86.92^{*}_{\pm1.37}$ & $77.07^{***}_{\pm1.64}$ & $93.02^{***}_{\pm0.91}$ & $88.93^{***}_{\pm0.35}$\\
    GCNII       & $63.44^{***}_{\pm0.85}$ & $41.96^{***}_{\pm1.02}$ & $36.89^{***}_{\pm0.95}$ & $46.60^{***}_{\pm1.20}$ & $89.18_{\pm4.43}$ &$88.46_{\pm0.82}$ & $79.97^{*}_{\pm0.65}$ & $89.94^{***}_{\pm0.31}$ & $89.68^{***}_{\pm0.30}$ \\
    \midrule
    \multicolumn{8}{c}{Spectral-based GNNs} \\
    \midrule
    Free eigenvalues & $69.58^{***}_{\pm1.31}$ & $59.76^{***}_{\pm1.01}$ & $41.61^{***}_{\pm0.63}$ & $44.28^{***}_{\pm1.13}$ & $88.20_{\pm3.28}$ & $84.91^{***}_{\pm0.89}$ & $77.39^{***}_{\pm0.82}$ & $86.08^{***}_{\pm0.81}$ & $86.07^{***}_{\pm0.47}$\\
    LanczosNet     & $64.81^{***}_{\pm1.56}$ & $48.64^{***}_{\pm1.77}$ & $38.16^{*}_{\pm0.91}$ & $48.35^{***}_{\pm0.40}$ & $76.39^{***}_{\pm4.43}$ &$87.77_{\pm1.45}$ & $80.05_{\pm1.65}$ & $93.21^{***}_{\pm0.85}$ & $84.41^{***}_{\pm0.66}$\\
    ChebyNet        & $59.28^{***}_{\pm1.25}$ & $40.55^{***}_{\pm0.42}$ & $37.61^{***}_{\pm0.89}$ & $50.20^{***}_{\pm0.52}$ & $77.21^{***}_{\pm2.95}$ & $86.67^{***}_{\pm0.82}$ & $79.11^{***}_{\pm0.75}$ & $93.77^{***}_{\pm0.32}$ & $90.11^{***}_{\pm0.26}$\\
    GPR-GNN         & $67.28^{***}_{\pm1.09}$ & $50.15^{***}_{\pm1.92}$ & $39.92_{\pm0.67}$ & $49.37^{***}_{\pm0.71}$ & $88.53_{\pm3.36}$ &$88.57_{\pm0.69}$ & $80.12_{\pm0.83}$ & $93.85^{***}_{\pm0.28}$ & $91.36_{\pm0.40}$ \\
    BernNet         & $68.29^{***}_{\pm1.58}$ & $51.35^{***}_{\pm0.73}$  & $41.79^{***}_{\pm1.01}$ & $48.82^{***}_{\pm0.20}$& $89.02_{\pm3.45}$ & $88.52_{\pm0.95}$ & $80.09_{\pm0.79}$ & $93.63^{***}_{\pm0.35}$ & $88.98^{***}_{\pm0.46}$ \\
    PPGNN  & $69.45^{***}_{\pm1.05}$ & $48.47^{***}_{\pm2.51}$ & $39.65_{\pm0.66}$ & $47.96^{***}_{\pm0.89}$ & $85.57_{\pm2.62}$ & $88.32 _{\pm0.60}$ & $80.98_{\pm0.51}$ & $95.09^{*}_{\pm0.31}$ & $90.11^{***}_{\pm0.26}$\\
    ChebNetII       & $71.37^{***}_{\pm1.01}$ & $57.72^{***}_{\pm0.59}$ & $41.75^{*}_{\pm1.07}$ & $48.79^{***}_{\pm0.21}$ & $89.11_{\pm3.43}$ & $88.71_{\pm0.93}$ & $80.53_{\pm0.79}$ & $94.92^{*}_{\pm0.33}$ & $89.76^{***}_{\pm0.32}$\\
    JacobiConv      & $73.92^{*}_{\pm1.07}$ & $57.38^{***}_{\pm0.60}$ & $40.43_{\pm0.81}$ & $48.53^{***}_{\pm0.96}$ & $89.02_{\pm2.79}$ &$88.69_{\pm1.03}$ & $\mathbf{81.65_{\pm0.46}}$ & $95.36_{\pm0.24}$ & $87.83^{***}_{\pm0.43}$ \\
    DSF-Jacobi-R  & $72.17^{***}_{\pm0.79}$ & $55.84^{***}_{\pm0.94}$& $39.89_{\pm0.54}$ & $48.68^{***}_{\pm0.34}$ & $89.18_{\pm3.44}$ & $88.31_{\pm0.89}$ & $81.11_{\pm0.63}$ & $94.90^{*}_{\pm0.31}$ & $88.97^{***}_{\pm0.39}$\\
    OptBasisGNN  & $74.40^{*}_{\pm0.90}$& $63.98^{*}_{\pm1.12}$& $\mathbf{42.39^{***}_{\pm0.52}}$ & $48.80^{***}_{\pm 0.21}$ & $87.38_{\pm2.79}$ & $87.96^{*}_{\pm0.71}$& $80.79_{\pm1.35}$& $94.71^{***}_{\pm0.33}$ & $87.36^{***}_{\pm0.41}$ \\
    Specformer  & $74.92_{\pm0.98}$& $64.26_{\pm1.18}$& $41.56^{*}_{\pm1.25}$& OOM & $83.60^{*}_{\pm3.27}$ & $87.55^{*}_{\pm0.87}$ & $80.98_{\pm0.79}$ & $95.29_{\pm0.30}$ & OOM\\
    UniFilter  & $74.11_{\pm1.68}$ & $63.52^{*}_{\pm1.30}$& $40.11_{\pm1.31}$ & $50.02^{***}_{\pm0.70}$ & $86.72_{\pm3.77}$ & $89.10_{\pm1.07}$ & $81.21_{\pm1.66}$ & $94.96_{\pm0.74}$ & $91.36_{\pm0.45}$\\
    \cdashline{1-10}\\[-0.8em]
    \method & $\mathbf{75.75_{\pm0.96}}$ & $\mathbf{65.67_{\pm0.82}}$ & $39.79_{\pm0.56}$ & $\mathbf{52.37_{\pm0.50}}$ & $\mathbf{89.34_{\pm3.11}}$ & $\mathbf{89.16_{\pm0.64}}$ & $80.98_{\pm0.57}$ & $\mathbf{95.65_{\pm0.34}}$ & $\mathbf{91.39_{\pm0.41}}$\\
    \bottomrule
    \end{tabular}}%
\vspace{1pt}
\raggedright
\footnotesize{$^{*}p < 0.05$ (significant difference from \method)} \\
\footnotesize{$^{***}p < 0.001$ (highly significant difference from \method)}
\end{table}%

\subsection{Results}

Table \ref{tab:node_cla_pfm} shows the node classification accuracy of \method~compared to baseline models across various datasets. We observe that \method~achieves the highest performance on seven datasets. Notably, the largest improvements are observed on the heterophilic datasets Chameleon, Squirrel, and Amazon-Ratings, with gains of 0.83\%, 1.41\%, and 2.09\%, respectively. This may be linked to the high multiplicity of the eigenvalue $0$ in the normalized adjacency matrix of these graphs (see Table \ref{tab:dataset_statistics}), to which our method gives more importance. 

For homophilic datasets, such as Cora, Amazon-Photo, and Pubmed, \method~also demonstrates competitive performance, achieving slight improvements over existing methods. The smaller gains suggest that traditional GNNs already perform well in these settings, as they inherently align with homophilic assumptions. Nonetheless, \method~remains robust, indicating that its spectral filtering approach does not hinder its ability to learn from homophilic graphs.
These results indicate that \method~is effective in both heterophilic and homophilic settings.

We also run $t$-tests comparing \method~with each baseline method to verify the statistical significance of our results.
Stars in Table \ref{tab:node_cla_pfm} show when a baseline performs significantly different than \method~(* for $p < 0.05$, *** for $p < 0.001$). Most baselines show significant differences on heterophilic datasets, confirming that our gains are meaningful. For example, all methods except Specformer and UniFilter show significantly lower performance on Chameleon. On homophilic datasets, several methods show no significant difference from \method, indicating competitive but not always superior performance.

\subsection{Ablation Study}

We have performed an ablation study using Eq. \eqref{eq:main} to evaluate the contribution of each component on model performance. The results in Table \ref{tab:ablation} reveal several key findings. First, the full model incorporating all three components (positive part, negative part, and polynomial filters) achieves the best performance on 6 out of 9 datasets, with notable improvements on Chameleon (75.75\%), Squirrel (65.67\%), and Amazon-Ratings (52.37\%).
The combination of positive and negative parts without polynomial filters also shows strong performance, suggesting that these components capture complementary spectral information.
For instance, on Squirrel, adding the negative part to the positive part improves accuracy from 60.50\% to 65.00\%. Interestingly, on the Actor dataset, using only polynomial filters yields the best performance (40.31\%), while on Citeseer, the positive part alone achieves optimal results (81.72\%).

In a separate experiment, we also create a simple spectral method with the eigenvalues as parameters. The results of this experiment are presented in Table \ref{tab:node_cla_pfm} with the model name ``Free eigenvalues''. However, this approach may be less effective because the method does not receive any explicit structural information associated with the eigenvalues.

\begin{table}[!tp]
\centering
\caption{Ablation study results.}
\label{tab:ablation}
\resizebox{\linewidth}{!}{
\begin{tabular}{cccccccccccc}
\toprule
\rotatebox{45}{\textbf{Pos. Part}} & \rotatebox{45}{\textbf{Neg. Part}} & \rotatebox{45}{\textbf{Poly.}} & \rotatebox{45}{\textbf{Chameleon}} & \rotatebox{45}{\textbf{Squirrel}} & \rotatebox{45}{\textbf{Actor}} & \rotatebox{45}{\textbf{Amazon-Ratings}} & \rotatebox{45}{\textbf{Texas}} & \rotatebox{45}{\textbf{Cora}} & \rotatebox{45}{\textbf{Citeseer}} & \rotatebox{45}{\textbf{Amazon-Photo}} & \rotatebox{45}{\textbf{Pubmed}}\\
\midrule
\xmark & \xmark & \cmark & $66.35^{***}_{\pm0.88}$ & $48.39^{***}_{\pm0.78}$ & $\mathbf{40.31}_{\pm0.94}$ & $50.21^{***}_{\pm0.87}$ & $88.69_{\pm3.28}$ & $88.74_{\pm0.77}$ & $80.33_{\pm0.85}$ & $95.57_{\pm0.35}$ & $91.24_{\pm0.40}$ \\
\xmark & \cmark & \xmark & $67.26^{***}_{\pm0.50}$ & $54.65^{***}_{\pm0.72}$ & $32.07^{***}_{\pm1.17}$ & $49.66^{***}_{\pm0.71}$ & $\mathbf{90.66}_{\pm2.30}$ & $84.30^{***}_{\pm1.06}$ & $75.45^{***}_{\pm0.68}$ & $92.70^{***}_{\pm0.34}$ & $90.72^{*}_{\pm0.39}$\\
\cmark & \xmark & \xmark & $73.61^{***}_{\pm0.81}$ & $60.50^{***}_{\pm1.03}$ & $38.98_{\pm0.78}$ & $47.35^{***}_{\pm0.76}$ & $90.00_{\pm3.11}$ & $86.80^{***}_{\pm1.16}$ & $\mathbf{81.72}_{\pm0.58}$ &  $94.95^{*}_{\pm0.33}$ & $90.74^{*}_{\pm0.48}$ \\
\cmark & \cmark & \xmark & $74.77_{\pm1.01}$ & $65.00_{\pm1.12}$ & $39.02_{\pm0.54}$ & $49.28^{***}_{\pm0.62}$ & $89.67_{\pm2.46}$ & $87.22^{*}_{\pm1.12}$ & $81.54_{\pm0.63}$ &  $94.76^{***}_{\pm0.39}$ & $90.83_{\pm0.41}$\\
\cmark & \cmark & \cmark & $\mathbf{75.75}_{\pm0.96}$ & $\mathbf{65.67}_{\pm0.82}$ & $39.79_{\pm0.56}$ & $\mathbf{52.37}_{\pm0.50}$ & $89.34_{\pm3.11}$ & $\mathbf{89.16}_{\pm0.64}$ & $80.98_{\pm0.57}$ & $\mathbf{95.65}_{\pm0.34}$ & $\mathbf{91.38}_{\pm0.41}$\\
\bottomrule
\end{tabular}}
\raggedright
\footnotesize{$^{*}p < 0.05$ (significant difference from full model)}\\
\footnotesize{$^{***}p < 0.001$ (highly significant difference from full model)}
\end{table}

\section{Limitations}
Our work has some limitations. The computational complexity of $\mathcal{O}(n^3)$ for eigendecomposition presents scalability challenges for large-scale graphs. However, this preprocessing step occurs only once and has become increasingly practical with modern hardware, taking approximately 8 minutes for datasets with 25,000 nodes on an NVIDIA A100 GPU. It is worth noting that our approach is designed for graphs of reasonable size (up to tens of thousands of nodes), where the eigendecomposition is a worthwhile one-time preprocessing cost given the expressivity benefits. Also, our work does not address the problem of scaling GNNs to massive graphs with millions of nodes, which is a separate research area requiring specialized techniques like graph sampling or partitioning beyond the scope of this paper. Furthermore, the model's performance is highly dependent on how we partition the eigenvalue intervals, and our current approach using hard thresholding to identify significant spectral changes may not be optimal.

\section{Conclusion}

In this paper, we presented the Piecewise Constant Spectral Graph Neural Network (\method), a new approach to graph prediction tasks. Our method aims to address some limitations of existing spectral GNNs by combining constant spectral filters with polynomial filters to capture a broader range of spectral characteristics in real-world graphs. We introduced an adaptive spectral partitioning technique that analyzes the derivative of sorted eigenvalues to identify significant spectral changes. This helps focus on the most informative regions of the spectrum. \method~expands the search space of possible eigenvalue filters beyond traditional polynomial-based filters, allowing for a more tailored capture of graph spectral properties. This is particularly useful when dealing with graphs that have high eigenvalue multiplicity. By integrating spectral filters with polynomial filters, our approach attempts to model both global graph structure and local neighborhood information. Our experiments on nine benchmark datasets, covering both homophilic and heterophilic graph structures, suggest that \method~performs well on both types of datasets.

\section*{Acknowledgements}
Vahan Martirosyan is the recipient of a PhD scholarship from the STIC Doctoral School of Université Paris-Saclay.

\bibliographystyle{tmlr}
\bibliography{main}

\newpage
\appendix
\section{Proofs}
\label{app:proofs}
\setcounter{proposition}{0}
\setcounter{theorem}{0}
\begin{theorem}[Approximation error for $\epsilon$-dense eigenvalues]
Let $\boldsymbol{\hat{A}} \in \mathbb{R}^{n \times n}$ be a normalized adjacency matrix with spectrum $\{\lambda_i\}_{i=1}^n$ where $-1 \leq \lambda_1 \leq \cdots \leq \lambda_n \leq 1$. Assume that these eigenvalues are $\epsilon$-dense on $[-1,1]$ and that $d^2\epsilon<1$. Let $f:[-1,1] \to \mathbb{R}$ be a filter function with $\|f\|_\infty= \sup_{x \in [-1,1]} |f(x)| = 1$. For any polynomial \(p \in \mathcal{P}_d\)(the space of polynomials of degree at most \(d\)) the approximation error is: 
\begin{equation}
\mathcal{E}(p,f) = \sum_{i=1}^n |p(\lambda_i) - f(\lambda_i)| \geq  \|p\|_\infty(1 - d^2\epsilon) - 1. 
\end{equation}
\end{theorem}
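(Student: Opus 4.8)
The plan is to bound the approximation error from below by exploiting the interplay between the sup-norm of the polynomial $p$ on $[-1,1]$ and the fact that $\epsilon$-density forces the eigenvalues to approximate every point of the interval — in particular a point where $|p|$ is close to its maximum. First I would recall the tool that controls how fast a degree-$d$ polynomial can vary: Markov's inequality, which states $\|p'\|_\infty \le d^2 \|p\|_\infty$ on $[-1,1]$. Let $x^\star \in [-1,1]$ be a point where $|p(x^\star)| = \|p\|_\infty$. By $\epsilon$-density there is an eigenvalue $\lambda_j$ with $|\lambda_j - x^\star| \le \epsilon$, and then the mean value theorem together with Markov gives $|p(\lambda_j)| \ge \|p\|_\infty - \|p'\|_\infty \epsilon \ge \|p\|_\infty(1 - d^2\epsilon)$, which is nonnegative by the hypothesis $d^2\epsilon < 1$.

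Next I would isolate the single term $i = j$ in the sum defining $\mathcal{E}(p,f)$ and discard the rest (all terms are nonnegative):
\begin{equation}
\mathcal{E}(p,f) = \sum_{i=1}^n |p(\lambda_i) - f(\lambda_i)| \ge |p(\lambda_j) - f(\lambda_j)|.
\end{equation}
Then apply the reverse triangle inequality, $|p(\lambda_j) - f(\lambda_j)| \ge |p(\lambda_j)| - |f(\lambda_j)|$, and use $|f(\lambda_j)| \le \|f\|_\infty = 1$ together with the lower bound on $|p(\lambda_j)|$ from the previous step. This yields
\begin{equation}
\mathcal{E}(p,f) \ge |p(\lambda_j)| - 1 \ge \|p\|_\infty(1 - d^2\epsilon) - 1,
\end{equation}
which is exactly the claimed bound.

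The only genuinely substantive ingredient is Markov's inequality; everything else is the triangle inequality and a density argument, so there is no serious obstacle. The one point requiring a little care is making sure the definition of $\epsilon$-density is used correctly — namely that it guarantees an eigenvalue within distance $\epsilon$ of \emph{any} target point in $[-1,1]$, including the argmax $x^\star$ of $|p|$ — and that the resulting lower bound on $|p(\lambda_j)|$ is nonnegative so that the reverse triangle inequality step is not vacuous, which is precisely why the hypothesis $d^2\epsilon < 1$ is imposed. I would also note that Markov's inequality should be applied to $p/\|p\|_\infty$ (or invoked in its homogeneous form) to get the scaling right; this is routine.
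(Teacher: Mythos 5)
Your proposal is correct and follows essentially the same route as the paper's proof: take the argmax $x_0$ of $|p|$, use $\epsilon$-density to find a nearby eigenvalue $\lambda_j$, combine Markov's inequality with the mean value theorem to get $|p(\lambda_j)| \geq \|p\|_\infty(1 - d^2\epsilon)$, then apply the reverse triangle inequality with $|f(\lambda_j)| \leq 1$ and drop the remaining nonnegative terms of the sum. The only difference is cosmetic (your remarks on normalizing $p$ in Markov's inequality and on the role of $d^2\epsilon < 1$), so there is nothing further to add.
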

\begin{proof}
Let $x_0 \in [-1,1]$ be a point where $|p(x_0)| = \|p\|_\infty$, \textit{i.e.}, the point where the polynomial $p$ attains its maximum absolute value on the interval $[-1,1]$.
Since the eigenvalues $\{\lambda_i\}_{i=1}^n$ are $\epsilon$-dense on $[-1,1]$, there exists an eigenvalue $\lambda_j$ such that $|x_0 - \lambda_j| \leq \epsilon$.
By Markov's polynomial inequality \citep{sahoo1998mean}:
\begin{equation}
\|p'\|_\infty \leq d^2\|p\|_\infty.
\end{equation}
Using the mean value theorem \citep{achiezer1992theory}, there exists $\xi \in [x_0, \lambda_j]$ (or $[\lambda_j, x_0]$ if $\lambda_j > x_0$) such that:
\begin{equation}
|p(x_0) - p(\lambda_j)| = |p'(\xi)| \cdot |x_0 - \lambda_j| \leq \|p'\|_\infty \cdot |x_0 - \lambda_j| \leq d^2\|p\|_\infty \cdot \epsilon.
\end{equation}
Therefore,
\begin{equation}
|p(\lambda_j)| \geq |p(x_0)| - |p(x_0) - p(\lambda_j)| \geq \|p\|_\infty - d^2\|p\|_\infty\epsilon = \|p\|_\infty(1 - d^2\epsilon).
\end{equation}
Since $\|f\|_\infty = 1$, we know that $|f(\lambda_j)| \leq 1$. Then,
\begin{equation}
|p(\lambda_j) - f(\lambda_j)| \geq |p(\lambda_j)| - |f(\lambda_j)| \geq \|p\|_\infty(1 - d^2\epsilon) - 1.
\end{equation}
Since $\mathcal{E}(p,f) = \sum_{i=1}^n |p(\lambda_i) - f(\lambda_i)| \geq |p(\lambda_j) - f(\lambda_j)|$, we have:
\begin{equation}
\mathcal{E}(p,f) \geq \|p\|_\infty(1 - d^2\epsilon) - 1.
\end{equation}
This establishes the claimed lower bound on the approximation error.
\end{proof}

\begin{theorem}[Approximation error for functions with jump discontinuities]
Let $\boldsymbol{\hat{A}} \in \mathbb{R}^{n \times n}$ be a normalized adjacency matrix with spectrum $\{\lambda_i\}_{i=1}^n$ where $-1 \leq \lambda_1 \leq \cdots \leq \lambda_n \leq 1$. Let \(f:[-1,1] \to \mathbb{R}\) be some filter function with $\|f\|_\infty= 1$ that the model needs to find and suppose it has a jump discontinuity of magnitude \(h > 0\) between consecutive eigenvalues \(\lambda_R\) and \(\lambda_{R+1}\). For any polynomial \(p \in \mathcal{P}_d\)(the space of polynomials of degree at most \(d\)), satisfying \(\|p\|_\infty \leq 1\), the approximation error is: 
\begin{equation}
\mathcal{E}(p,f) = \sum_{i=1}^n |p(\lambda_i) - f(\lambda_i)| \geq h - |\lambda_{R+1} - \lambda_R| \cdot d^2. 
\end{equation}
\end{theorem}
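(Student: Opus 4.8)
The plan is to isolate the two eigenvalues straddling the jump, $\lambda_R$ and $\lambda_{R+1}$, and bound only their combined contribution to $\mathcal{E}(p,f)$ from below; since $\mathcal{E}(p,f) = \sum_i |p(\lambda_i) - f(\lambda_i)|$ is a sum of nonnegative terms, it suffices to show $|p(\lambda_R) - f(\lambda_R)| + |p(\lambda_{R+1}) - f(\lambda_{R+1})| \geq h - |\lambda_{R+1} - \lambda_R| \cdot d^2$. This mirrors the structure of the previous theorem's proof, where a single well-chosen eigenvalue carried the whole lower bound.

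First I would apply the triangle inequality to the jump itself: $h = |f(\lambda_{R+1}) - f(\lambda_R)| \leq |f(\lambda_{R+1}) - p(\lambda_{R+1})| + |p(\lambda_{R+1}) - p(\lambda_R)| + |p(\lambda_R) - f(\lambda_R)|$, which rearranges to $|p(\lambda_R) - f(\lambda_R)| + |p(\lambda_{R+1}) - f(\lambda_{R+1})| \geq h - |p(\lambda_{R+1}) - p(\lambda_R)|$. The target function cannot interpolate across the jump, so any polynomial that tries to do so must either pay near $\lambda_R$ or near $\lambda_{R+1}$ unless it also jumps quickly — and that is exactly what a low-degree polynomial cannot do.

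Next I would control the polynomial increment $|p(\lambda_{R+1}) - p(\lambda_R)|$ using the same two tools already invoked for Theorem 1. Markov's polynomial inequality gives $\|p'\|_\infty \leq d^2 \|p\|_\infty \leq d^2$, where the second inequality is precisely where the hypothesis $\|p\|_\infty \leq 1$ enters (this is why the theorem is stated for normalized polynomials, as motivated by the discussion after Theorem 1). Then the mean value theorem yields $|p(\lambda_{R+1}) - p(\lambda_R)| = |p'(\xi)|\cdot|\lambda_{R+1} - \lambda_R| \leq d^2 |\lambda_{R+1} - \lambda_R|$ for some $\xi$ between $\lambda_R$ and $\lambda_{R+1}$. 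Substituting this back and using $\mathcal{E}(p,f) \geq |p(\lambda_R) - f(\lambda_R)| + |p(\lambda_{R+1}) - f(\lambda_{R+1})|$ gives the claimed bound.

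I do not expect a genuine obstacle: the argument is a short chain of triangle inequality, Markov's inequality, and the mean value theorem, reusing machinery already set up in the excerpt. The only points requiring a little care are making sure the $\|p\|_\infty \leq 1$ assumption is actually used (it converts Markov's $d^2\|p\|_\infty$ into a $p$-independent constant $d^2$), and noting that since $\lambda_R, \lambda_{R+1}$ are assumed to be consecutive eigenvalues, $|\lambda_{R+1}-\lambda_R|$ is exactly the local gap appearing in the bound, which is what makes the result sharp for densely packed spectra.
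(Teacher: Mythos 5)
Your proposal is correct and matches the paper's proof essentially step for step: both isolate the local error $|p(\lambda_R)-f(\lambda_R)|+|p(\lambda_{R+1})-f(\lambda_{R+1})|$, bound it below via the triangle inequality by $h - |p(\lambda_{R+1})-p(\lambda_R)|$, and control the polynomial increment with Markov's inequality (using $\|p\|_\infty \le 1$) plus the mean value theorem. No gaps; this is the same argument the paper gives.
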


\begin{proof}
By Markov's polynomial inequality \citep{sahoo1998mean}:
\begin{equation}
\|p'\|_\infty \leq d^2\|p\|_\infty \leq d^2, \quad \forall p \in \mathcal{P}_d.
\end{equation}
Next using the mean value theorem \citep{achiezer1992theory}, we find that $\exists~\xi_{xy} \in [x,y]$ such that:
\begin{equation}
|p(x) - p(y)| = |p'(\xi_{xy})| \cdot |x-y| \leq d^2|x-y|, \quad \forall~x,y \in [-1,1].
\end{equation}
Now, we define the local error at consecutive eigenvalues as:
\begin{equation}
\mathcal{E}_2 = |p(\lambda_R) - f(\lambda_R)| + |p(\lambda_{R+1}) - f(\lambda_{R+1})|.
\end{equation}
Applying the triangle inequality, we obtain:
\begin{align}
\begin{split}
\mathcal{E}_2 &\geq |f(\lambda_{R+1}) - f(\lambda_R)| - |p(\lambda_{R+1}) - p(\lambda_R)| \\
&\geq h - |\lambda_{R+1} - \lambda_R| \cdot d^2.
\end{split}
\end{align}
Since it holds that $\mathcal{E}(p,f) \geq \mathcal{E}_2$, the result follows.
\end{proof}

\begin{proposition}[This follows directly from \citep{lim2023sign}]
Polynomial filters are invariant to sign changes and basis choices.
\end{proposition}

\begin{proof}
The key property we use is that the product \( \boldsymbol{U}_{\mu} \boldsymbol{U}_{\mu}^\top \), where \( \boldsymbol{U}_{\mu} \) is a matrix of eigenvectors associated with eigenvalue \( \mu \), remains invariant under sign changes and different choices of basis \citep{lim2023sign}. This implies that regardless of which orthonormal basis is chosen for a given eigenspace, the product \( \boldsymbol{U}_{\mu} \boldsymbol{U}_{\mu}^\top \) stays the same.

Consider two orthonormal bases \( \boldsymbol{U}_{\mu} \) and \( \boldsymbol{V}_{\mu} \) for the eigenspace corresponding to eigenvalue \( \mu \). There exists an orthogonal matrix \( \boldsymbol{Q} \) such that:
\begin{equation}
\boldsymbol{V}_{\mu} = \boldsymbol{U}_{\mu} \boldsymbol{Q}.
\end{equation}
Using this, we show the invariance:
\begin{equation}
\boldsymbol{V}_{\mu} \boldsymbol{V}_{\mu}^\top = (\boldsymbol{U}_{\mu} \boldsymbol{Q})(\boldsymbol{U}_{\mu} \boldsymbol{Q})^\top = \boldsymbol{U}_{\mu} \boldsymbol{Q} \boldsymbol{Q}^\top \boldsymbol{U}_{\mu}^\top = \boldsymbol{U}_{\mu} \boldsymbol{U}_{\mu}^\top.
\end{equation}
where we used the fact that \( \boldsymbol{Q} \boldsymbol{Q}^\top = \boldsymbol{I} \) since \( \boldsymbol{Q} \) is orthogonal. This confirms that \( \boldsymbol{U}_{\mu} \boldsymbol{U}_{\mu}^\top \) is invariant under the change of basis.

Now, consider the matrix power \( \boldsymbol{\hat{A}}^p \), which can be expressed as:
\begin{equation}
\boldsymbol{\hat{A}}^p = \sum_{i=1}^{n} \lambda_i^p \boldsymbol{u}_{i} \boldsymbol{u}_{i}^\top = \sum_{i=1}^{s} (\lambda'_i)^p \boldsymbol{U}_{\lambda'_i} \boldsymbol{U}_{\lambda'_i}^\top,
\end{equation}
where \( \lambda'_i \) are the distinct eigenvalues, and \( \boldsymbol{U}_{\lambda'_i} \) are the corresponding eigenvector matrices.

Since each term \( \boldsymbol{U}_{\lambda'_i} \boldsymbol{U}_{\lambda'_i}^\top \) is invariant to basis choices and each term \(\boldsymbol{u}_{i} \boldsymbol{u}_{i}^\top = (-\boldsymbol{u}_{i}) (-\boldsymbol{u}_{i})^\top \) is invariant to sign changes, it follows that \( \boldsymbol{\hat{A}}^p \) is also invariant to both. 
\end{proof}

\begin{proposition}
Constant filters are invariant to sign changes and basis choices.
\end{proposition}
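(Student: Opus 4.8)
The plan is to reduce the claim to the single fact already used in the proof of Proposition 1 --- that the orthogonal projector $\boldsymbol{U}_\mu\boldsymbol{U}_\mu^\top$ onto the eigenspace of an eigenvalue $\mu$ is independent of the orthonormal basis chosen for that eigenspace (and in particular of sign flips). First I would observe that $\boldsymbol{T}_k = \boldsymbol{U}_{[:,a_k:a_{k+1}]}\boldsymbol{U}_{[:,a_k:a_{k+1}]}^\top$ is, a priori, the orthogonal projector onto the span of a contiguous block of sorted eigenvectors. The subtlety is that an arbitrary contiguous block of columns of $\boldsymbol{U}$ need not be a union of whole eigenspaces; if a partition boundary fell strictly inside a cluster of equal eigenvalues, the block would depend on the arbitrary choice of basis within that degenerate eigenspace, and the claim would fail. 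So the crux is to rule this out.

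The key step is therefore to argue that the boundaries $a_1,\ldots,a_{K-2}$ returned by Algorithm~\ref{a1} never split a multiplicity block: the algorithm explicitly sets $s_i \gets 0$ whenever $\lambda_i = \lambda_{i-1}$, so any index selected among the top-$(K-2)$ significance scores satisfies $\lambda_{a_k} \neq \lambda_{a_k-1}$ (with a fixed tie-breaking convention to handle the degenerate case where fewer than $K-2$ strictly positive scores exist). Combined with $a_0 = 0$ and $a_{K-1} = n+1$, this guarantees that each interval $[a_k, a_{k+1})$ is a union of complete multiplicity blocks. Letting $S_k$ denote the set of distinct eigenvalues whose indices lie in the $k$-th interval, we may then write
\begin{equation}
\boldsymbol{T}_k = \boldsymbol{U}\boldsymbol{E}_k\boldsymbol{U}^\top = \sum_{\mu \in S_k}\boldsymbol{U}_\mu\boldsymbol{U}_\mu^\top,
\end{equation}
so that $\boldsymbol{T}_k$ is a sum of eigenspace projectors.

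Next I would invoke the lemma from the proof of Proposition~1 verbatim: for any two orthonormal bases of the eigenspace of $\mu$, related by $\boldsymbol{V}_\mu = \boldsymbol{U}_\mu\boldsymbol{Q}$ with $\boldsymbol{Q}$ orthogonal, one has $\boldsymbol{V}_\mu\boldsymbol{V}_\mu^\top = \boldsymbol{U}_\mu\boldsymbol{Q}\boldsymbol{Q}^\top\boldsymbol{U}_\mu^\top = \boldsymbol{U}_\mu\boldsymbol{U}_\mu^\top$, with sign flips being the special case $\boldsymbol{Q} = \operatorname{diag}(\pm 1)$. Summing over $\mu \in S_k$ shows each $\boldsymbol{T}_k$ is unchanged under sign flips and basis changes. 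Finally, since the entrywise positive and negative parts $(\boldsymbol{T}_k)^{+}$ and $(\boldsymbol{T}_k)^{-}$ are deterministic functions of the now-invariant matrix $\boldsymbol{T}_k$, the entire construction in Eq.~\eqref{eq:main} inherits the invariance.

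I expect the main obstacle to be the first half --- rigorously connecting the behaviour of Algorithm~\ref{a1} to the multiplicity structure of the spectrum. Two points need care: (i) the windowed means and standard deviations are computed on the derivative sequence $\boldsymbol{d}$ of the \emph{sorted eigenvalues}, which does not depend on the eigenvector basis, so the chosen boundaries $a_k$ are themselves basis-invariant quantities; and (ii) the zeroing of $s_i$ at repeated eigenvalues must be shown to genuinely forbid a boundary inside a degenerate block, which requires pinning down the tie-breaking rule used to pick the top-$(K-2)$ indices. Once this structural fact is established, the rest is an immediate application of Proposition~1's argument.
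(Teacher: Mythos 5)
Your proposal is correct and follows essentially the same route as the paper's proof: use the fact that Algorithm~\ref{a1} zeroes the significance score at repeated eigenvalues so no boundary splits a multiplicity block, write $\boldsymbol{T}_k$ as a sum of whole-eigenspace projectors $\boldsymbol{U}_{\mu}\boldsymbol{U}_{\mu}^\top$, and invoke the basis/sign invariance of those projectors from Proposition~1. Your additional remarks (basis-invariance of the boundaries themselves, the tie-breaking caveat, and the invariance of $(\boldsymbol{T}_k)^{+}$ and $(\boldsymbol{T}_k)^{-}$) are sound refinements but do not change the argument.
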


\begin{proof}
Consider the constant filter \( \boldsymbol{T}_k \) defined over the interval \([a_k, a_{k+1})\) for some \( k \). Let \( t_k \) be the index such that \( \lambda'_{t_k} = \lambda_{a_k} \), and thus \( \lambda'_{t_{k+1} - 1} = \lambda_{a_{k+1} - 1} \).

Suppose \( l, r \) are indices such that \( \lambda_{l-1} \neq \lambda_l = \lambda_{l+1} = \dots = \lambda_{r - 1} \neq \lambda_{r} \). According to Alg. \ref{a1}, no significant point \( i \) will be selected with \( l < i < r \), ensuring that constant eigenvalue intervals are not split. Consequently, \( \lambda'_{t_k} \neq \lambda_{a_k - 1} \) when \( a_k > 1 \), and \( \lambda'_{t_{k+1}-1} \neq \lambda_{a_{k+1}} \) when \( a_{k+1} \leq n \).

The distinct eigenvalues of \( \boldsymbol{\hat{A}} \) within the interval \( [\lambda_{a_k}, \lambda_{a_{k+1} - 1}] \) are:
\begin{equation}
\lambda'_{t_k}, \lambda'_{t_k + 1}, \ldots, \lambda'_{t_{k+1} - 1}.
\end{equation}
Using Equation \ref{eq:const_filter}, we express \( \boldsymbol{T}_k \) as:
\begin{equation}
\boldsymbol{T}_k = \boldsymbol{U}_{[:, a_k:a_{k+1}]} \boldsymbol{U}_{[:, a_k:a_{k+1}]}^\top
= \sum_{i=a_k}^{a_{k+1} - 1} \boldsymbol{u}_{\lambda_i} \boldsymbol{u}_{\lambda_i}^\top = \sum_{i=t_k}^{t_{k+1} - 1} \boldsymbol{U}_{\lambda'_i} \boldsymbol{U}_{\lambda'_i}^\top.
\end{equation}
Since each term \( \boldsymbol{U}_{\lambda'_i} \boldsymbol{U}_{\lambda'_i}^\top \) is invariant to basis choices and each term \(\boldsymbol{u}_{i} \boldsymbol{u}_{i}^\top = (-\boldsymbol{u}_{i}) (-\boldsymbol{u}_{i})^\top \) is invariant to sign changes, it follows that \( \boldsymbol{T}_k \) is also invariant to both. 
\end{proof}

\section{Parameter Sensitivity Analysis of Algorithm \ref{a1}}
\label{app:alg_sens}

\begin{figure}[t]
 \centering
 \begin{minipage}[b]{0.45\textwidth}
   \includegraphics[width=\textwidth]{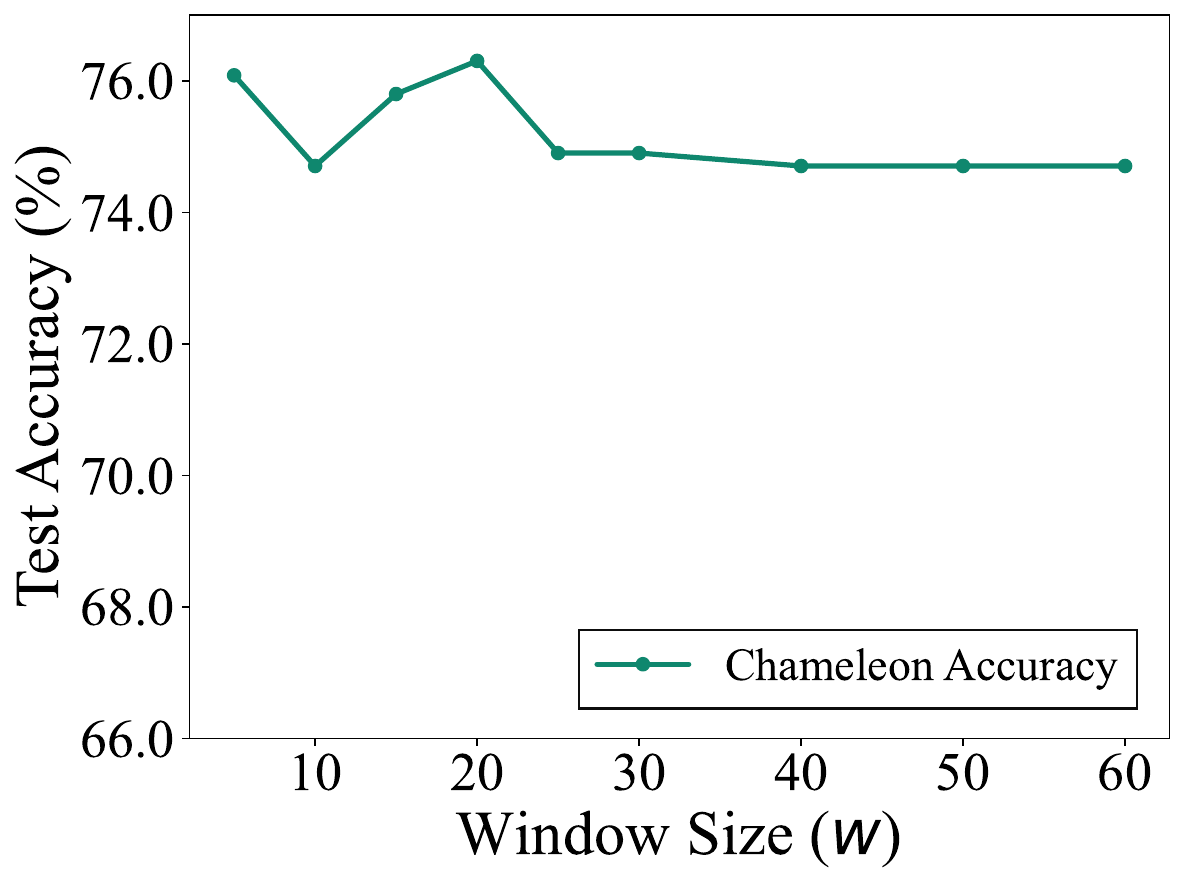}
   \caption{Effect of $w$ on Chameleon dataset performance with optimal $K$.}
   \label{fig:window_chameleon}
 \end{minipage}
 \hspace{1cm}
 \begin{minipage}[b]{0.45\textwidth}
   \includegraphics[width=\textwidth]{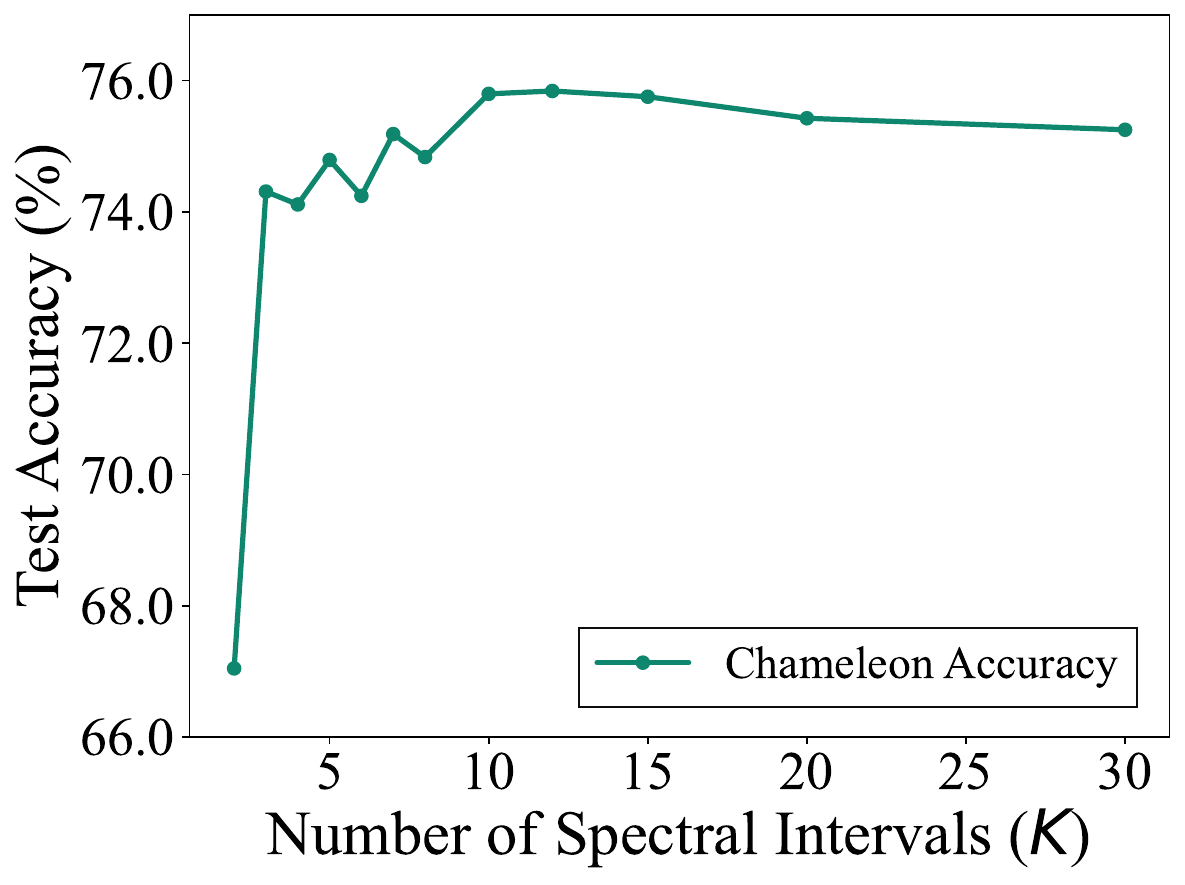}
   \caption{Effect of $K$ on Chameleon dataset performance with optimal $w$.}
   \label{fig:partitions_chameleon}
 \end{minipage}
\end{figure}

\begin{figure}
 \centering
 \begin{minipage}[b]{0.45\textwidth}
   \includegraphics[width=\textwidth]{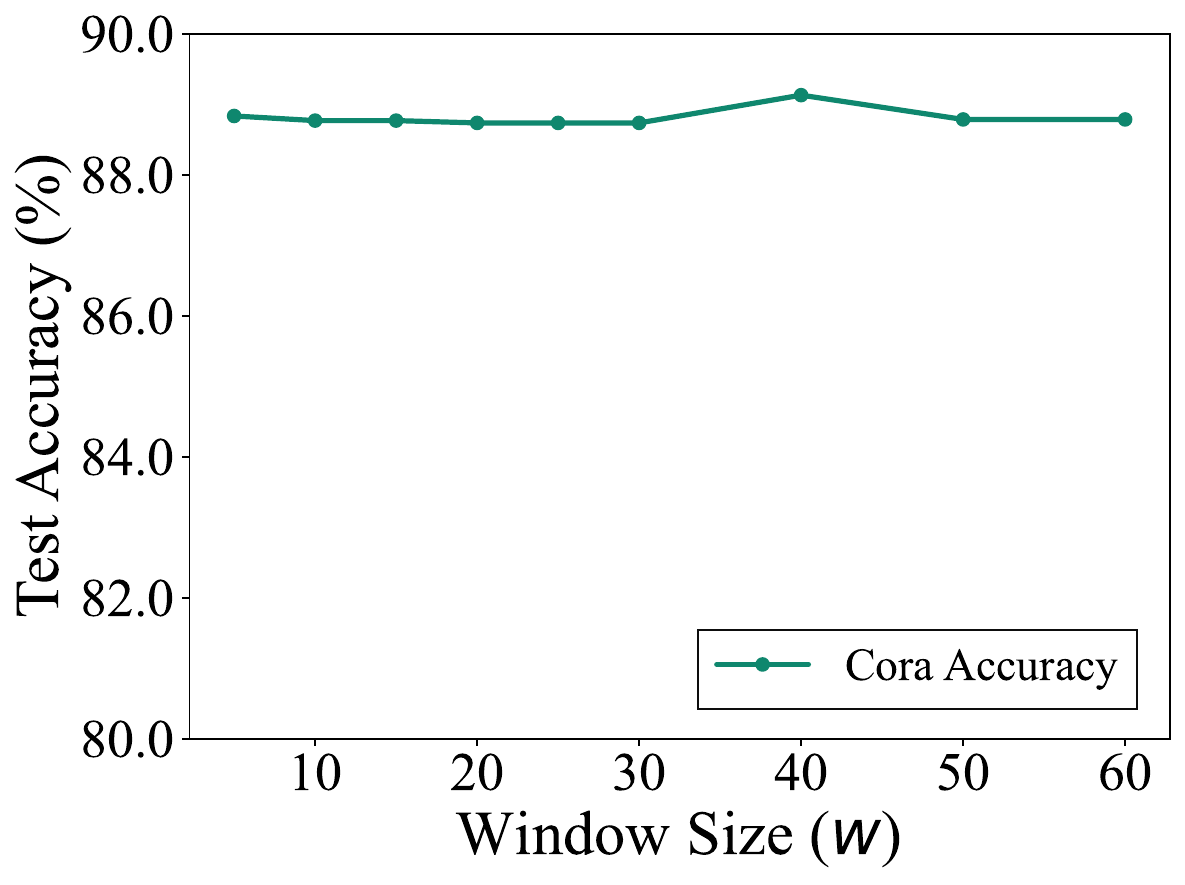}
   \caption{Effect of $w$ on Cora dataset performance with optimal $K$.}
   \label{fig:window_cora}
 \end{minipage}
 \hspace{1cm}
 \begin{minipage}[b]{0.45\textwidth}
   \includegraphics[width=\textwidth]{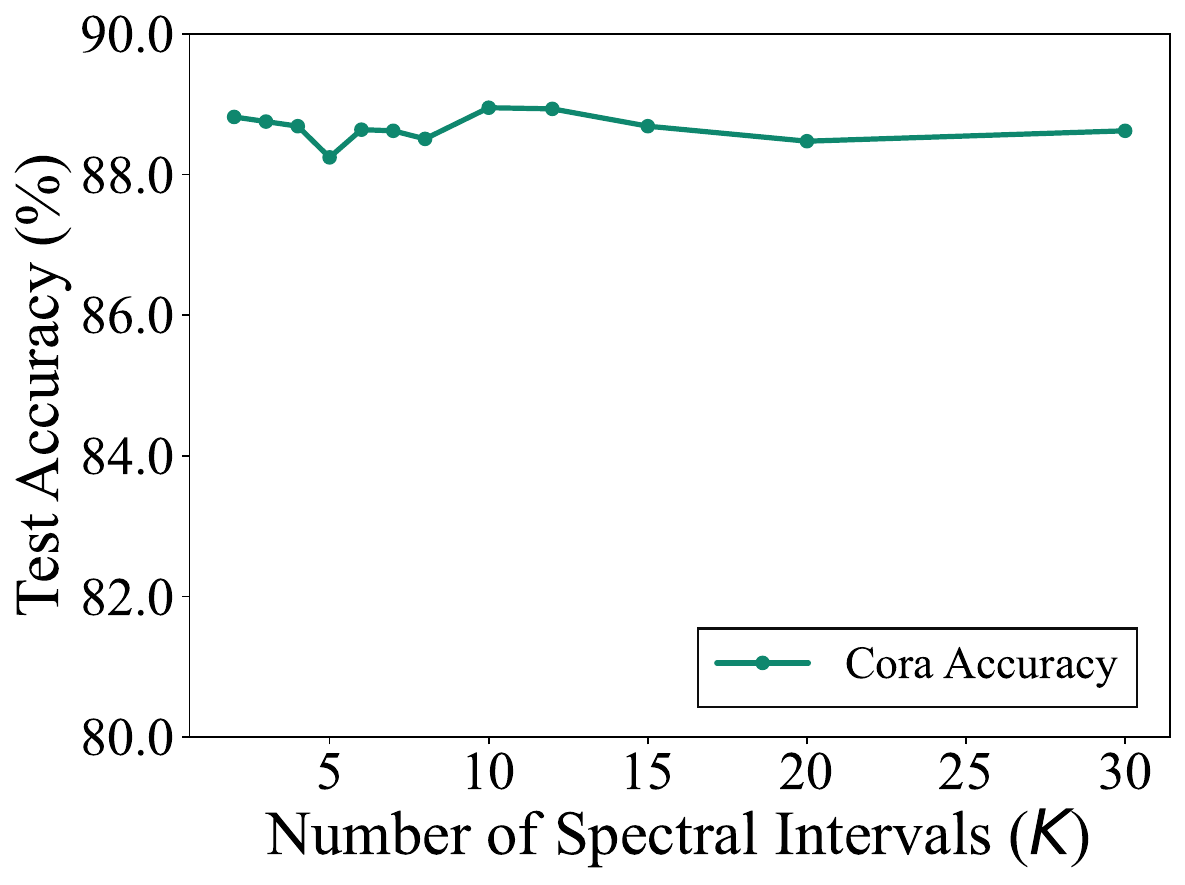}
   \caption{Effect of $K$ on Cora dataset performance with optimal $w$.}
   \label{fig:partitions_cora}
 \end{minipage}
\end{figure}

We further investigate the impact of the window size ($w$) and the number of limits ($K$) from Alg. \ref{a1} on model performance. To isolate the effect of each parameter, we conduct controlled experiments where we fix one parameter at its optimal value while varying the other. Specifically, when examining window size effects, we fix $K$ at its optimal value for each dataset, and when studying the number of spectral intervals, we fix $w$ at its optimal value.
As shown in Fig.~\ref{fig:window_chameleon} and Fig.~\ref{fig:partitions_chameleon}, the Chameleon dataset exhibits distinctive parameter sensitivities. Performance peaks at $w=20$ with an accuracy of 76.30\%, followed by a sharp decline and stabilization around 74.70\% for larger window sizes. For the number of spectral intervals, we observe a dramatic performance increase from $K=2$ to $K=10$, with optimal performance in the range $10\leq K \leq 15$ (reaching 75.84\%), followed by a gradual decrease as $K$ increases further.
In contrast, Cora (Fig.~\ref{fig:window_cora} and Fig.~\ref{fig:partitions_cora}) shows more stability across different window sizes with an optimal value at $w=40$ reaching 89.13\% accuracy. For spectral intervals, Cora demonstrates a similar pattern to Chameleon but with less pronounced differences, showing peak performance at $10\leq K \leq 12$.

These results indicate that the number of spectral intervals ($K$) significantly influences model performance, whereas window size ($w$) has a more limited effect. Too few partitions fail to capture important spectral characteristics, while too many may introduce noise.

\section{Computational Efficiency Comparison}
\label{app:time}

To evaluate the computational efficiency of our approach, we conduct a running time analysis comparing the constant filter component of \method~with JacobiConv's polynomial filters. 

Figure~\ref{fig:time_comparison} illustrates the average epoch time (in seconds) as we increase the number of spectral intervals ($K$) in \method~compared to increasing the polynomial degree in JacobiConv. For a fair comparison, we disabled the polynomial part in the \method~implementation for this experiment.

\begin{figure}[!t]
  \centering 
  \vspace{-10pt}
  \includegraphics[width=.7\linewidth]{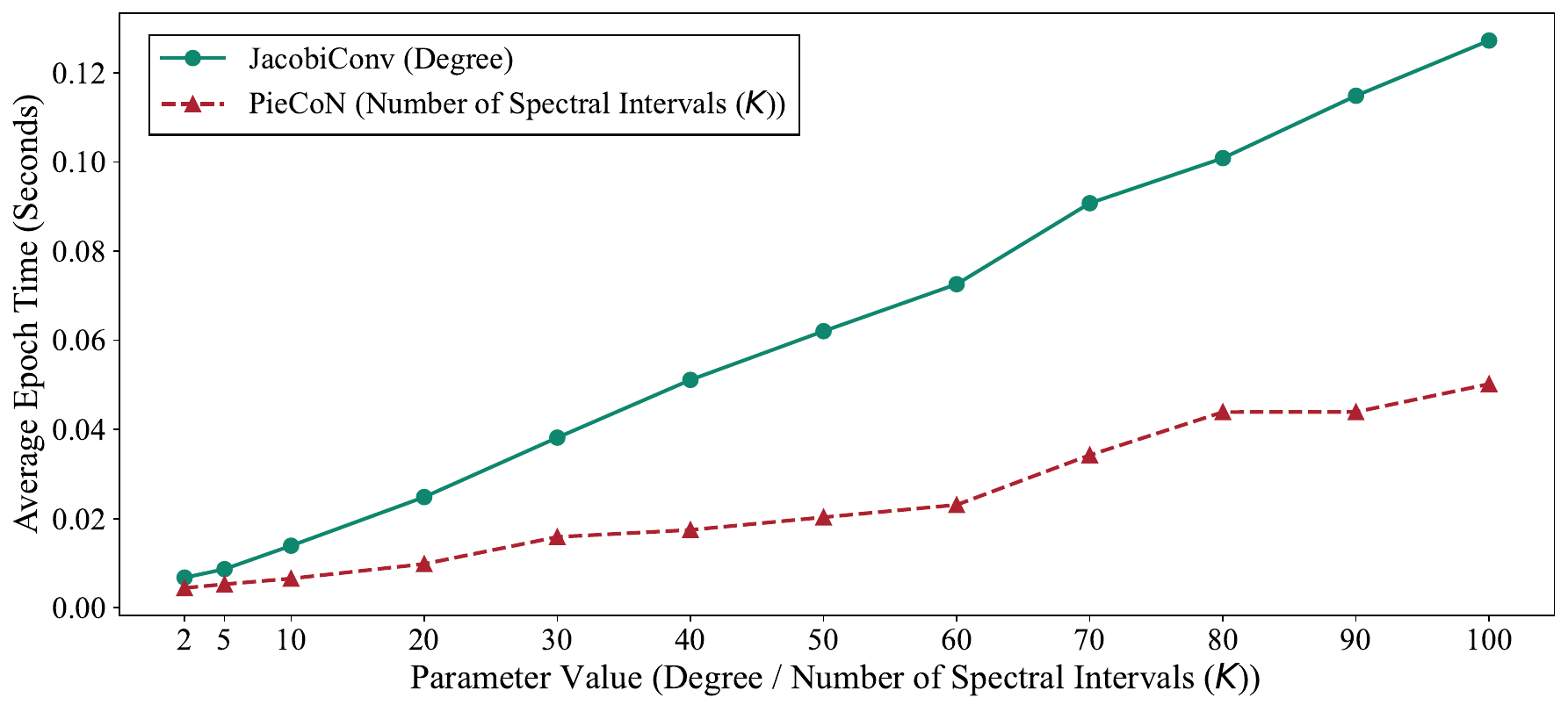}
  \caption{Computation time of PieCoN vs. JacobiConv.}
\label{fig:time_comparison}
\end{figure}

The results show that \method's constant filters require less computation time than high-degree polynomial filters. At parameter value $K = 100$, \method~takes approximately $0.05$ seconds per epoch compared to JacobiConv's $0.13$ seconds. This efficiency comes from our direct spectral interval filtering approach, which applies filters to specific eigenvalue intervals rather than performing sequential matrix multiplications needed for higher-degree polynomials.

\section{Analysis of Graph Structure and Eigenvalue Zero}
\label{app:eig_zero}

\begin{wrapfigure}{r}{0.35\textwidth}
\centering
\includegraphics[width=\linewidth]{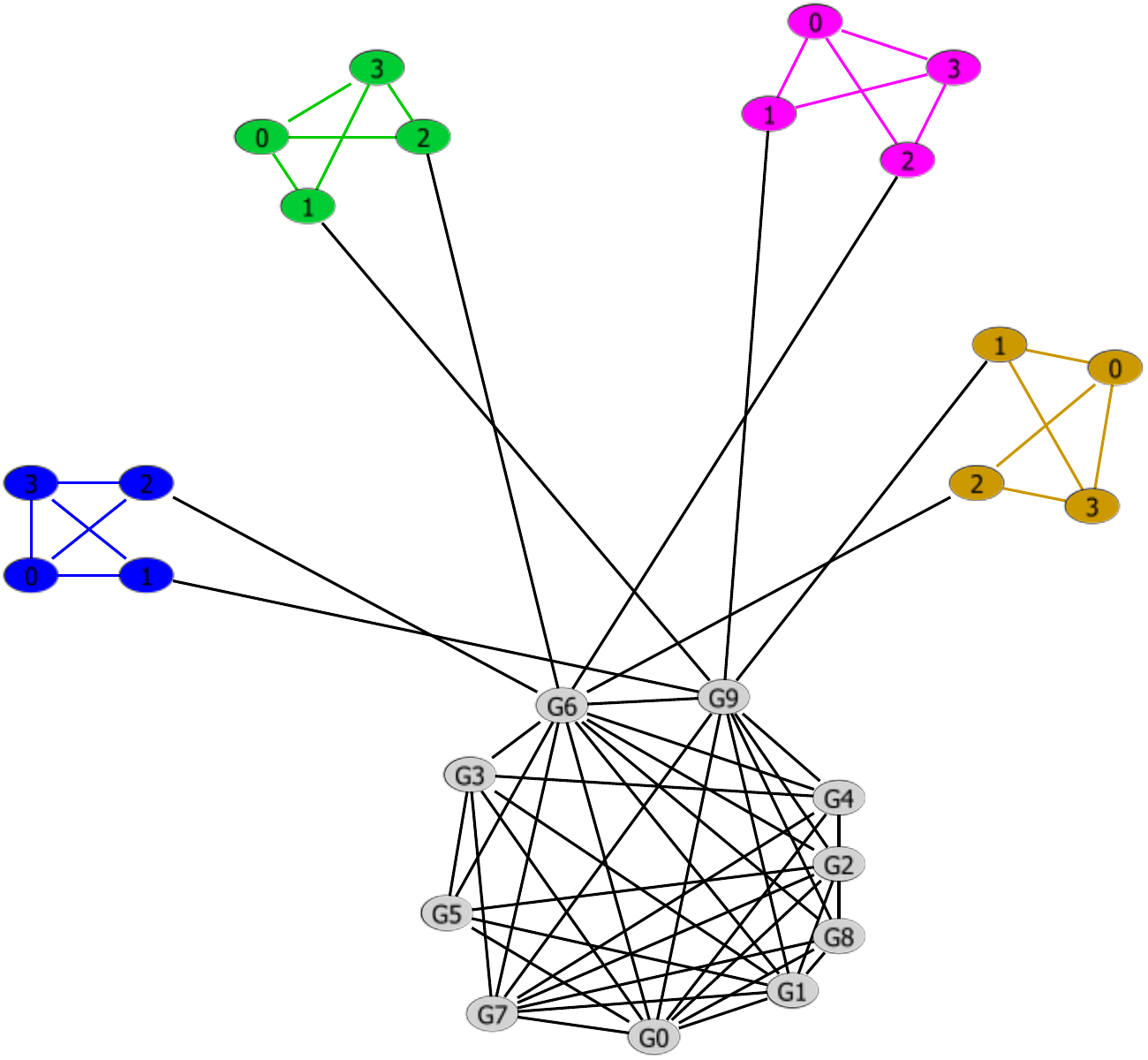}
\caption{Simple graph with duplicate subgraphs. After adding the duplicates, the multiplicity of eigenvalue 0 increases from 0 to 3.}
\label{fig:simple_graph}
\vspace{-10pt}
\end{wrapfigure}

The presence of eigenvalue 0 in graph spectra reveals important structural properties that many polynomial-based GNN methods overlook. Real-world datasets often exhibit high multiplicity of eigenvalue 0 (Table \ref{tab:dataset_statistics}), yet methods like JacobiConv \citep{Jacobi}, Bernnet \citep{BernNet}, and Chebynet \citep{ChebyNet}, which use low-degree polynomials of the normalized adjacency matrix $\boldsymbol{\hat{A}}$, do not adequately capture these properties.

\begin{theorem}[\citet{banerjee2008spectrum}]
\label{thm1}

Let $J$ be a graph and $H$ be a subgraph of $J$ with eigenvalue $0$. Then $V(H) = {p_1, p_2, \ldots, p_m} \subseteq V(J)$ and $E(H) \subseteq E(J)$ consists of edges between vertices in $V(H)$.
We define $J^H$ to be the graph obtained from $J$ by attaching a copy of $H$ as follows:
(1) add a new set of vertices ${q_1, q_2, \ldots, q_m}$ to $J$, where each $q_i$ corresponds to $p_i$ in $H$;
(2) add edges between vertices in ${q_1, q_2, \ldots, q_m}$ that mirror the edges in $H$;
(3) for each $i \in {1, 2, \ldots, m}$ and each vertex $r \in V(J) \setminus V(H)$, add an edge ${q_i, r}$ if there exists an edge ${p_i, r}$ in $J$.

Then, the graph \( J^H \) has an eigenvalue \( 0 \) with an associated eigenvector $u$ that is nonzero only at the nodes \( p_i\) and \( q_i \). Furthermore $u_{p_i} = -u_{q_i}$.
\end{theorem}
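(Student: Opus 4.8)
The plan is to construct the claimed eigenvector explicitly and verify the eigenvalue equation $\boldsymbol{\hat A}_{J^H}\, u = 0$ componentwise, exploiting the symmetry between the original copy of $H$ on vertices $\{p_i\}$ and the attached copy on $\{q_i\}$. First I would set up notation: let $A_{J^H}$ denote the adjacency matrix of $J^H$ (I will work with the plain adjacency matrix first, then transfer to $\boldsymbol{\hat A} = D^{-1/2} A D^{-1/2}$). Since $H$ has eigenvalue $0$, there is a vector $v$ supported on $V(H)=\{p_1,\dots,p_m\}$ with $A_H v = 0$, i.e.\ for every $i$, $\sum_{j:\, \{p_i,p_j\}\in E(H)} v_j = 0$. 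I would then define $u$ on $V(J^H)$ by $u_{p_i} = v_i$, $u_{q_i} = -v_i$, and $u_r = 0$ for all $r \in V(J)\setminus V(H)$.

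Next I would check $A_{J^H} u = 0$ at the three types of vertices. (1) At an ``outside'' vertex $r \in V(J)\setminus V(H)$: its neighbors among $\{p_i\}$ and $\{q_i\}$ come in matched pairs by construction step (3) — $\{q_i,r\}$ is an edge iff $\{p_i,r\}$ is — so the contributions $u_{p_i}$ and $u_{q_i} = -u_{p_i}$ cancel, and neighbors inside $V(J)\setminus V(H)$ contribute $0$; hence $(A_{J^H}u)_r = 0$. (2) At $p_i$: its neighbors are (a) the $H$-neighbors $p_j$, contributing $\sum_{j} v_j = (A_H v)_i = 0$, and (b) outside neighbors $r$, each contributing $u_r = 0$; note that by construction $p_i$ is \emph{not} adjacent to any $q_j$. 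So $(A_{J^H}u)_{p_i} = 0$. (3) At $q_i$: by steps (2) and (3) its neighborhood mirrors that of $p_i$ — $H$-edges among the $q$'s and the same outside edges — giving $\sum_j u_{q_j} + \sum_r u_r = -\sum_j v_j + 0 = 0$. This proves $A_{J^H} u = 0$.

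To pass to the normalized adjacency matrix $\boldsymbol{\hat A}_{J^H}$, I would observe that the degree of $p_i$ in $J^H$ equals its degree in $J$ (its $H$-neighbors and outside neighbors are unchanged, and it gains no edges to the $q$'s), the degree of $q_i$ equals that same number, and outside vertices have unchanged degrees. Writing $D$ for the degree matrix of $J^H$, the vector $w := D^{-1/2} u$ — or equivalently, rescaling $v$ by the original degrees — satisfies $\boldsymbol{\hat A}_{J^H} w = D^{-1/2} A_{J^H} D^{-1/2}(D^{1/2}\!\cdot\! D^{-1/2}u)$; more carefully, since $A_{J^H}u = 0$ and $\boldsymbol{\hat A}_{J^H}(D^{-1/2}u') = D^{-1/2}A_{J^H}u'$ when $u' = D^{-1/2}u$ is interpreted consistently, one gets that $D^{-1/2}u$ is a $0$-eigenvector of $\boldsymbol{\hat A}_{J^H}$; it is still supported only on the $p_i$ and $q_i$, and since $\deg(p_i) = \deg(q_i)$ the relation $u_{p_i} = -u_{q_i}$ is preserved after the $D^{-1/2}$ rescaling. (Alternatively, if the statement is read with the plain adjacency matrix the last paragraph is unnecessary.)

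The main obstacle I anticipate is purely bookkeeping rather than conceptual: being careful about which edges exist in $J^H$ — in particular that there are \emph{no} cross edges $\{p_i, q_j\}$, and that the ``outside'' adjacencies of $q_i$ exactly replicate those of $p_i$ — so that the cancellations in cases (1) and (3) are exact. A secondary subtlety is the normalization step: one must confirm $\deg_{J^H}(p_i) = \deg_{J^H}(q_i)$ so that the sign-flip relation survives the transfer from $A$ to $\boldsymbol{\hat A}$, and that attaching $H$ does not change the degrees of the $p_i$ (it does not, since no new edges touch them). Everything else follows from $A_H v = 0$ and the triangle-free pairing structure of the construction.
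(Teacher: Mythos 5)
The paper does not prove this statement itself---it is quoted from \citet{banerjee2008spectrum} and used as-is in Appendix D---so your proposal can only be judged against the standard argument, which is exactly the one you give: build the antisymmetric vector $u_{p_i}=v_i$, $u_{q_i}=-v_i$, $u_r=0$ from a null vector $v$ of $H$, and verify $A_{J^H}u=0$ at the three vertex types using the facts that there are no $p_i$--$q_j$ cross edges and that the outside neighbourhoods of $q_i$ and $p_i$ coincide. That verification is correct, so the core of your proof is fine. Two points need repair, though. First, your transfer to the normalized adjacency goes the wrong way: if $A_{J^H}u=0$ then the $0$-eigenvector of $\boldsymbol{\hat{A}}_{J^H}=D^{-1/2}A_{J^H}D^{-1/2}$ is $D^{1/2}u$, since $\boldsymbol{\hat{A}}_{J^H}(D^{1/2}u)=D^{-1/2}A_{J^H}u=0$; the vector $D^{-1/2}u$ you propose satisfies no such identity (your displayed chain of equalities does not parse into a proof). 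The conclusion you want still holds---the support is unchanged and, because $\deg_{J^H}(p_i)=\deg_{J^H}(q_i)$, the relation $u_{p_i}=-u_{q_i}$ survives the $D^{1/2}$ rescaling up to a common positive factor per pair---but the exponent must be fixed. Second, your computation at $p_i$ (contribution of neighbours inside $V(H)$ equals $(A_Hv)_i$) and the degree equality $\deg_{J^H}(p_i)=\deg_{J^H}(q_i)$ both silently assume that \emph{every} edge of $J$ joining two vertices of $V(H)$ belongs to $E(H)$, i.e.\ that $H$ is the induced subgraph on $\{p_1,\dots,p_m\}$; if $H$ were a non-induced subgraph, extra $J$-edges inside $V(H)$ would break the cancellation at $p_i$ and make the degrees of $p_i$ and $q_i$ differ. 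This is the intended reading of the statement, but it should be made explicit rather than left implicit in ``the $H$-neighbours $p_j$''.
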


Theorem \ref{thm1} reveals that when a graph contains duplicate substructures, it leads to eigenvalue 0 with eigenvector localized to specific node sets. This localization property is particularly relevant for node classification and community detection tasks, as nodes with similar structural roles often share the same labels (Fig. \ref{fig:simple_graph}).

In analyzing how the negative and positive parts of $\boldsymbol{T}_k$ affect and change the structure of the graph, we consider a simple graph with duplicate subgraphs, as illustrated in Fig.~\ref{fig:simple_graph}.
In this graph, nodes with the same labels are duplicates.
According to Theorem \ref{thm1}, these duplicates create eigenvalues equal to $0$ in the eigendecomposition of the normalized Adjacency matrix.
Let $\boldsymbol{U}_0$ denote the eigenvectors corresponding to the eigenvalue $0$.
We can decompose $\boldsymbol{R}_0 = \boldsymbol{U}_0 \boldsymbol{U}_{0}^\top$ into its negative ($\boldsymbol{R}^{-}_{0}$) and positive parts ($\boldsymbol{R}^{+}_{0}$).
Using these parts, we construct two graphs by choosing the edges with the highest score in these matrices.
The graphs resulting from this process, including original and added edges, are shown in Figures~\ref{fig:simple_graph_pos} and~\ref{fig:simple_graph_neg}.

From these graphs, we observe that both negative and positive edges identify connections between duplicate motifs.
The negative edges also reveal connections between duplicate nodes within these duplicate motifs, highlighting their role in capturing structural similarities.

\begin{figure}[!tp]
  \centering
  \begin{minipage}[b]{0.49\textwidth}
    \centering
    \includegraphics[width=0.85\textwidth]{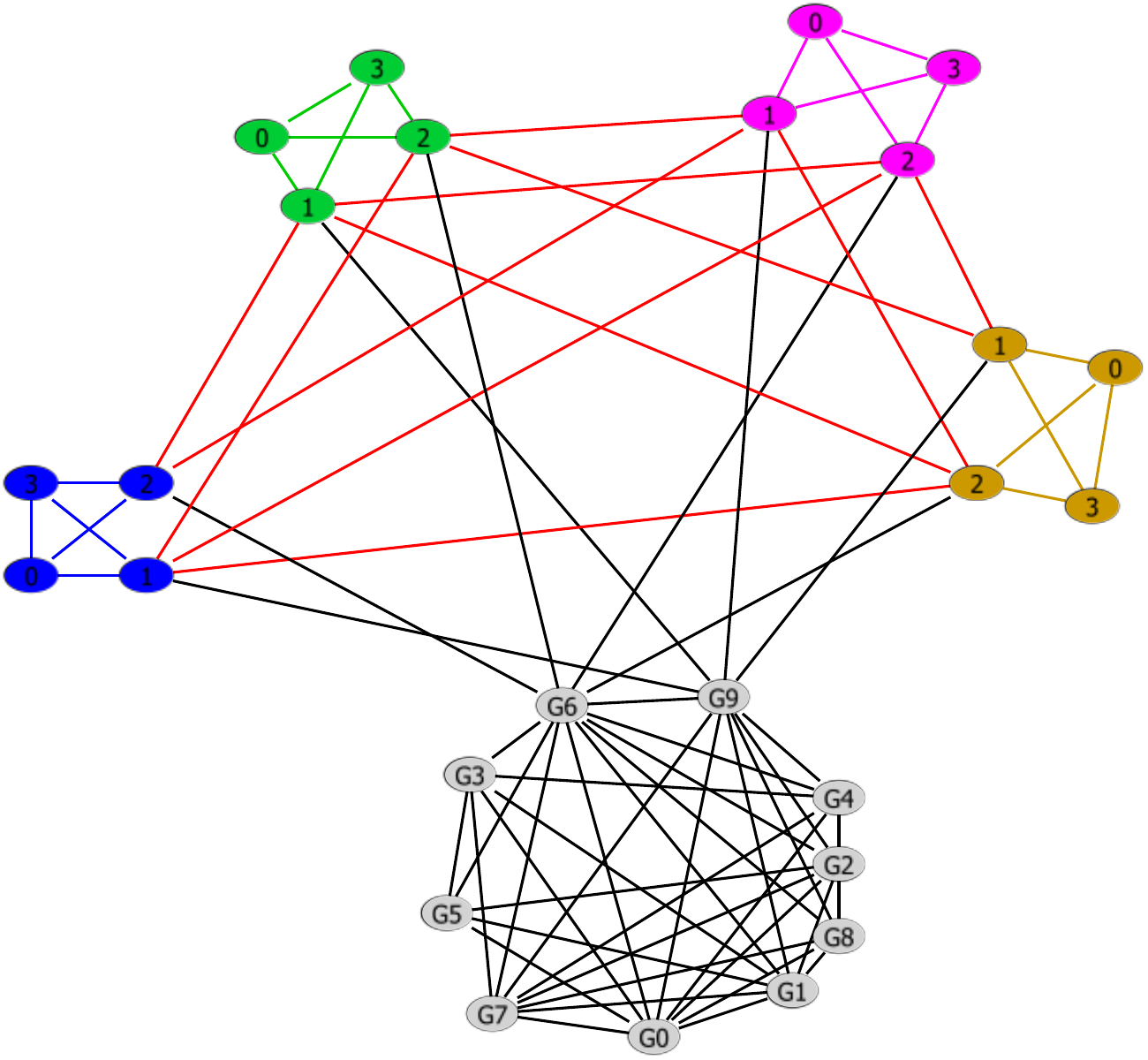}
    \caption{Simple graph with added positive edges.}
    \label{fig:simple_graph_pos}
  \end{minipage}
  \hfill
  \begin{minipage}[b]{0.49\textwidth}
    \centering
    \includegraphics[width=0.85\textwidth]{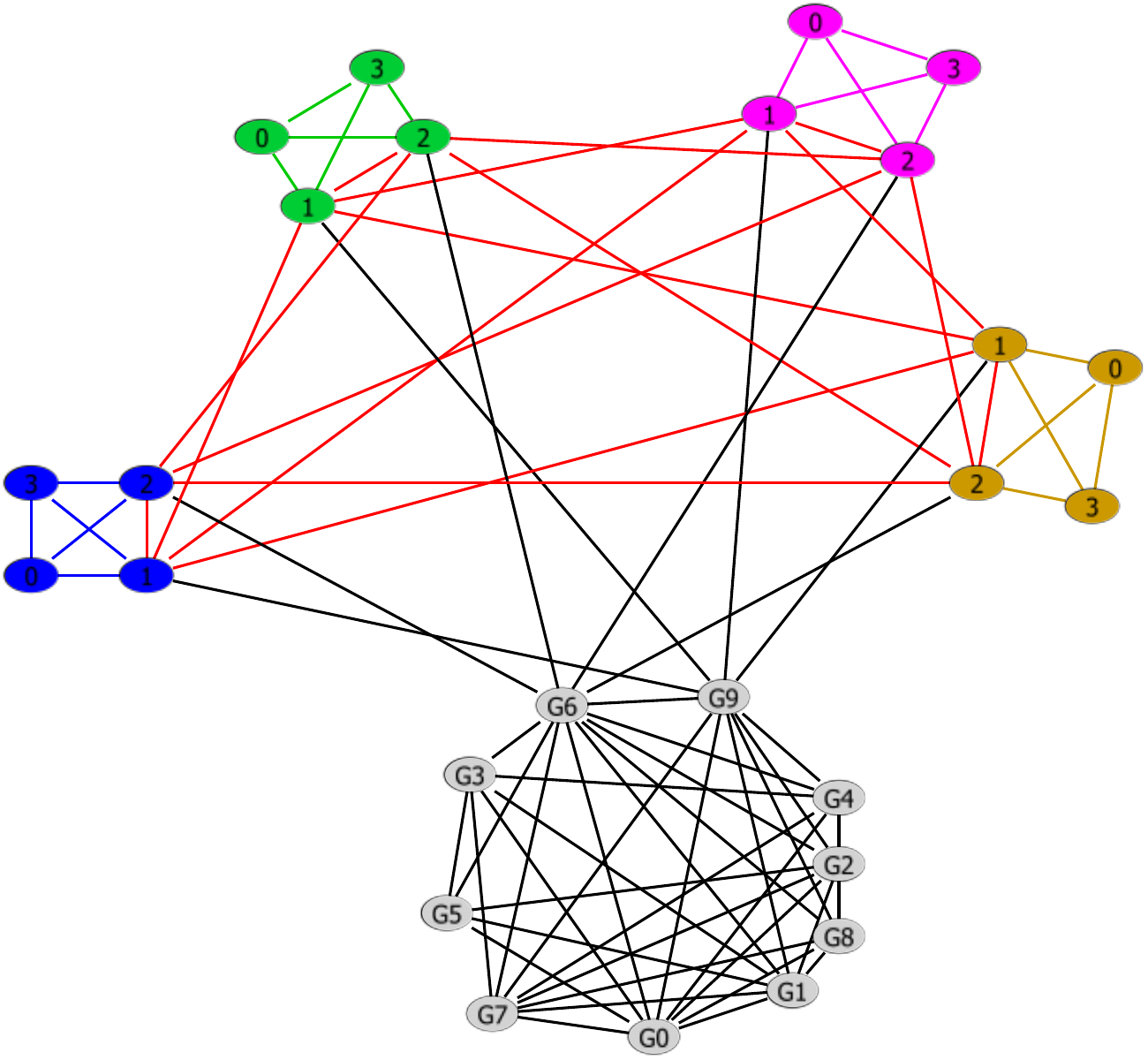}
    \caption{Simple graph with added negative edges.}
    \label{fig:simple_graph_neg}
  \end{minipage}
\end{figure}

Another intuition to split $\boldsymbol{T}_k$ is that, for example, the second eigenvector provides a direction that best separates the graph into two groups while minimizing the connections cut between them.
The positive and negative values show two clusters that are internally connected but separated from each other \citep{Fiedler1973, VonLuxburg2007}.

\section{Hyperparameter Optimization and Running Environment}
\label{app:hyperparameters}

\begin{table}[!t]
\centering
\caption{Hyperparameter ranges used for optimization.}
\resizebox{\linewidth}{!}{
\begin{tabular}{ll}
\toprule
\textbf{Hyperparameter} & \textbf{Values} \\
\midrule
Learning Rate (\texttt{lr}) & 0.0005, 0.001, 0.005, 0.01, 0.05 \\
Weight Decay (\texttt{weight\_decay}) & 0.0, 5e-5, 1e-4, 5e-4, 1e-3 \\
Feature Dropout (\texttt{feat\_dropout}) & 0.0, 0.1, 0.2, 0.3, 0.4, 0.5, 0.6, 0.7, 0.8, 0.9 \\
Number of Layers (\texttt{nlayer}) & 1, 2, 3, 4, 5 \\
Hidden Dimension (\texttt{hidden\_dim}) & 16, 32, 64 \\
Window size for Algorithm \ref{a1} (\texttt{window\_size}) & 5, 10, 15, 20, 25, 30, 35, 40, 45, 50, 55, 60, 65, 70, 75, 80, 85, 90, 95, 100 \\
Number of spectral intervals (\texttt{num\_limits}) & 0, 1, 2, 3, 4, 5, 6, 7, 8, 9, 10, 11, 12, 13, 14, 15, 16, 17, 18, 19, 20 \\
\bottomrule
\end{tabular}
\label{tb:hyper}
}
\end{table}

All experiments were carried out on a Linux machine with an NVIDIA A100 GPU, Intel Xeon Gold 6230 CPU (20 cores @ 2.1GHz), and 24GB RAM. Hyperparameter tuning was performed using the Hyperopt Tree of Parzen Estimators (TPE) algorithm \citep{NIPS2011_86e8f7ab} with the hyperparameter ranges shown in Table \ref{tb:hyper}.

The Adam optimizer was used for training with 2,000 epochs. Hyperparameters were selected to achieve the best performance on a validation set.

\end{document}